%% file: arXiv_sp.tex
\documentclass[11pt, letterpaper]{article}
\usepackage[margin=1in]{geometry}
\usepackage[utf8]{inputenc}
\usepackage{url}
\usepackage{amsfonts}  
\usepackage{amsmath}
\usepackage{xcolor}         
\usepackage{mathtools}
\usepackage{dsfont}
\usepackage{amsthm}
\usepackage{multirow} 
\usepackage{natbib,hyperref}
\usepackage{graphicx}

\usepackage{cleveref}
\usepackage[dvipsnames]{xcolor}
\usepackage{tikz}
\usepackage{booktabs}
\usepackage[ruled,vlined, linesnumbered]{algorithm2e}
\usepackage{cleveref}

\usepackage{authblk}

\newcommand{\ale}[1]{\textcolor{Apricot}{~Ale:~#1}}
\newcommand{\aj}[1]
{\textcolor{blue}{~Adel:~#1}}
\newcommand{\ka}[1]
{\textcolor{red}{~Kareem:~#1}}

\usepackage{tabularx}
\usepackage{xcolor}
\usepackage{caption}
\newcommand{\match}[1]{\textcolor{blue!80!black}{\textbf{#1}}}

\newcommand{\cM}{\ensuremath{\mathcal{M}}}

\newcommand{\cF}{\ensuremath{\mathcal{F}}}

\newcommand{\cX}{\ensuremath{\mathcal{X}}}

\newcommand{\Prob}{\mathrm{Pr}}
\newcommand{\E}{\mathrm{E}}
\newcommand{\de}{\mathrm{d}}
\newcommand{\pval}{p_{\rm val}}

\newtheorem{lemma}{Lemma}
\newtheorem{theorem}{Theorem}

\newtheorem{proposition}{Proposition}
\newtheorem{remark}{Remark}

\usetikzlibrary{positioning, arrows.meta,shadows}

\title{Phantoms and Disclosures: A Statistical Framework for Auditing Privacy in Synthetic Data}

\author[1]{Kareem Amin}
\author[1]{Rudrajit Das}
\author[1]{Alessandro Epasto}
\author[1,2]{Adel Javanmard}
\author[1]{Dennis Kraft}
\author[1]{M\'onica Ribero}
\author[1]{Sergei Vassilvitskii\thanks{The authors' names are listed in alphabetical order.}}

\affil[1]{Google}
\affil[2]{University of Southern California}
 \date{}

\begin{document}

\maketitle
\begin{abstract}
The rapid adoption of generative AI and Large Language Models (LLMs) has spurred interest in synthetic data as a privacy-preserving alternative to sensitive real-world datasets. However, generating high-utility synthetic data often carries the risk of memorizing and regurgitating private information from the training corpus. In this work, we present a customizable empirical auditing framework designed to detect and explain such data disclosures. Our framework introduces a mechanism to distinguish between \textit{true disclosures}---where the system directly reproduces a user's information---and \textit{phantom disclosures}---where the system incidentally generates a user's data. By partitioning input data into training and holdout sets and applying rigorous statistical hypothesis testing, we determine if observed disclosures are consistent with strict privacy baselines, such as zero-learning or specific Differential Privacy (DP) bounds. Crucially, this approach requires no model access, no canary insertion, and no reference model training — only the synthetic output and a held-out control set. We demonstrate that this framework effectively functions as a membership inference attack, providing empirical lower bounds on privacy leakage that are tighter than prior data-based auditing methods. 
Our approach is model-agnostic, applies to any synthetic data generation mechanism, and requires orders of magnitude fewer computational resources than shadow-model or canary-based alternatives.
\end{abstract}


%

\section{Introduction}
The rise of generative AI has created a great deal of interest in the study of synthetic, machine-generated, data. Large language models (LLMs) can be prompted to create synthetic documents resembling real data from multiple, often sensitive, domains such as emails, court documents, health records, or search queries. 

As the capabilities of these models continue to increase, AI-generated datasets are becoming a promising alternative to real human data, specifically, as a \emph{privacy-preserving measure}~\cite{jordon2022synthetic}. Consider a hospital service that wishes to study treatment outcomes by sharing its data with external researchers. Rather than contend with the privacy and compliance risks associated with releasing its raw patient records, the hospital might release a \emph{synthetic data} analog, produced with the assistance of an AI model. 

The details behind synthetic data generation can vary greatly across deployments due to the technical capabilities of the data-owner and the requirements of the users of the data. For example, prompting an LLM to produce synthetic data that matches the \emph{schema} of hospital records may be good-enough to stress test a data ingestion pipeline. However, 
the same data is less useful to a researcher interested in the  relationship between treatments and outcomes. For this application, the researcher requires that the synthetic data capture the statistical relationships within the private dataset. 

As a consequence, there are many techniques for generating synthetic data, many of which make use of signal in the sensitive private dataset in some way. From the viewpoint of data privacy, this creates a dilemma. On the one hand, synthetic data can be viewed as a safer alternative to real data in almost any context where real data is used. On the other hand, generating \emph{high-quality} synthetic data often requires capturing precisely the signals that make private data sensitive. As a result, good synthetic data can appear as sensitive as the corpus used to generate it. 

A non-exhaustive list of synthetic data generation techniques includes: prompting an LLM to produce data in some desired format, prompting an LLM to rewrite private data records that have been placed in-context, or fine-tuning an LLM to produce responses similar to the private corpus. There are also several analogs to these approaches that provide formal differential privacy (DP) guarantees~\cite{dwork2006calibrating} such as DP inference, DP fine-tuning, DP tabular data generation,  and private evolution (see~\cite{ponomareva2025dp} for a review). 

To complicate matters further, the privacy properties of the system may depend on a variety of implementation decisions that are not captured in any formal guarantees. For example, scrubbing personally identifiable information (PII) from the source data can be materially beneficial to privacy. Similarly, the choice of privacy parameters in a DP system can often be so large as to render the formal guarantees vaccuous. However, many real deployments (most famously the US Census~\cite{abowd2018us}) argue that there are tangible privacy benefits to these algorithms nonetheless. 

Consequently, there is a critical need for rigorous auditing frameworks to evaluate the privacy guarantees of synthetic data deployments. To remain practical, such an audit must be fundamentally decoupled from—and robust to—the specific algorithmic techniques used to generate the data.

Motivated by these considerations, we adopt the perspective of an internal privacy auditor seeking to quantify leakage risks prior to releasing synthetic data. Our approach centers on disclosure detection within the generated dataset, where a disclosure is defined as any feature or information that occurs rarely in the source corpus yet resurfaces in the synthetic output. By focusing on disclosures, our framework relies only on looking at the data, and offers several distinct advantages. 
 
{\bf Black-box auditing.} Given the myriad ways in which synthetic data might be generated, our system makes no assumptions about the nature of the data or the synthetic data generation procedure. The synthetic data does not need to have been generated by fine-tuning and prompting a model~\cite{yue2023synthetic,mattern2022differentially,kurakin2023harnessing}, but could be the result of any synthetic data generation procedure such as inference~\cite{amin2024private,tang2024privacy} or  evolution~\cite{lin2023differentially}. This contrasts with prior work on model based auditing~\cite{shokri2017membership,carlini2022membership,ye2022enhanced} which can only audit synthetic data generated through a model. In order to conduct the audit, the auditor needs access only to the synthetic data, the private data that was synthesized, and a held-out dataset that was not used for synthesis. 

{\bf Turn-key deployment.} Prior black-box approaches~\cite{meeus2025canary} require designing domain-specific canaries to attack the system and training additional reference models~\cite{zarifzadeh2023low}. Such systems require privacy experts with domain knowledge of the data to ensure adequate canaries are inserted into the system. Our system avoids both the computational bottleneck of fine-tuning large models, and the human-in-the-loop dependency created by canary design. 

{\bf Flexible disclosure detection.} Our system produces witnesses of privacy leakage, called disclosures. Identifying these disclosures is important as it allows the auditor and decision makers to reason concretely about the risks in the synthetic data release. Moreover, the system provides the auditor with a predefined list of disclosure \emph{classes} to choose from. This list is flexible and can be expanded as users of the system identify new types of risks that they wish to measure. 

{\bf Phantom disclosure detection.} We identify and formalize a pervasive but previously overlooked phenomenon: a large fraction of apparent privacy violations in synthetic data are what we call phantoms — disclosures that would occur with equal probability regardless of whether the user's data was included in training. In our experiments, phantoms account for over 35\% of detected disclosures (e.g., 271 of 763 PII matches in the Finance dataset). This finding has immediate practical implications: naïve disclosure counts dramatically overstate privacy risk. Our system specifically measures and grounds the phantom disclosure rate using holdout data. 

{\bf Formal privacy leakage.} Beyond simple detection, we also use the set of detected disclosures to provide formal measurements of privacy leakage. Our framework provides two levels of analysis: a zero-learning test that detects any dependence between synthetic outputs and source data, and a DP-bounded learning test that assesses whether observed disclosures are consistent with $\varepsilon$-differential privacy. Both tests provide formal type I error control (Theorems~\ref{thm:DP-valid} and \ref{thm:mia_main}) and yield empirical lower bounds on the privacy parameter $\varepsilon$ (Theorem~\ref{thm:eps-LB}). The framework is analogous to a controlled experiment in which the holdout set serves as the control group and the training set as the treatment group. We also construct membership inference attacks using just disclosures, providing yet another formal metric of privacy leakage (the AUC of the attack). Unlike prior MIA approaches, our attack requires no model access, no canary insertion, and no shadow-model training — only the released synthetic data and a held-out sample from the same distribution.

As with any statistical testing (and any empirical privacy testing),  evidence gathered in this way can only formally disprove the null hypothesis that the synthetic data deployment is privacy-safe. However, failure to reject the null hypothesis for small $\epsilon$ is nevertheless quite informative when few concrete disclosures are detected, and the disclosure classes accurately characterize the types of risks that the auditor is practically concerned about. 

\subsection{Related work}
Our work provides a framework for auditing synthetically generated data for privacy violations. We provide a brief overview of existing literature on private synthetic generation and privacy auditing.

{\bf Privacy-preserving synthetic data generation}
Since its introduction~\cite{DMNS06}, Differential Privacy (DP) has emerged as the gold standard for formal privacy protection. Recent advancements in generating differentially private synthetic data typically follow one of two paradigms: fine-tuning a model with privacy constraints or utilizing a public model with privacy protections applied during inference.

The first approach~\cite{yue2023synthetic,mattern2022differentially,kurakin2023harnessing} involves learning using DP-SGD~\cite{abadi2016deep} on a private dataset, often starting from a language model pretrained on public data. Once trained, this DP-protected model can be prompted to generate an arbitrary volume of synthetic data without further compromising user privacy. 

The second approach avoids the high computational costs of private training~\cite{ghazi2024differentially} by employing models trained exclusively on public data, but where private data is used somewhere during inference. Some methods in this category prompt the model using private data~\cite{amin2024private,tang2024privacy} and sample synthetic data from a perturbed distribution. Others~\cite{lin2023differentially,xie2024differentially,lin2025differentially} utilize private evolution, where the private data is used to ``vote'' (via DP mechanisms) on which model-generated samples to retain during an iterative process, ensuring the model is never directly queried with private inputs. 

Because our audit methodology requires access only to the resulting synthetic data rather than the underlying model it is applicable to all such generation methods, including those without inherent privacy protections (e.g., sampling from an SFT model trained without DP).

{\bf Privacy auditing}
Generative models are known to sometimes expose their training data creating privacy risks~\cite{carlini2019secret,carlini2023quantifying} and copyright risks~\cite{meeus2024copyright,shi2023detecting} (see~\cite{prashanth2024recite} for a survey on memorization, recitation, and data extraction). Consequently, significant research has focused on empirically testing these systems. Central to this field is the study of attacks on models trained with DP-SGD, primarily through membership inference attacks (MIA)~\cite{JE19, ROF21, CYZF20, jagielski2020auditing,carlini2023quantifying,carlini2019secret,steinke2023privacy} and data reconstruction attacks~\cite{Guo22, Balle22}. 

Empirical testing of generative models generally follows two directions: model-based auditing and data-based auditing. Model-based auditing is the more prevalent approach (see~\cite{steinke2023privacy,shokri2017membership,carlini2019secret,carlini2023quantifying,hallinan2025surprising,jagielski2020auditing,tian2024proving}). This approach typically involves inserting ``canaries'' (i.e., artificially crafted examples) into the training set and subsequently querying the model either for completions or intermediate weights to determine if a specific canary was memorized. 

{\bf Auditing Data}
The more relevant work to our own is the pioneering study by Meeus et al.~\cite{meeus2025canary}, who were the first to investigate auditing synthetic text data directly. They are motivated by the accuracy of the attack surface; synthetic data is often released without the model used to generate it. Given the proliferation of synthetic data generation techniques, we are also motivated by the need for algorithm-agnostic techniques for evaluating the privacy properties of synthetic data. 

The framework of Meeus et al.~\cite{meeus2025canary} assumes an attacker who can inject canaries into the training set and has access only to the generated synthetic data, not the model itself. The authors demonstrate that training simple n-gram models on the synthetic data allows an attacker to identify inserted canaries. This method performs multiple retrainings of the model on partial subsets of the canary data (to obtain so called reference models). Our approach, notably, does not require such retraining reducing the computational cost.

Prior to this, synthetic data audits focused on specific domains such as images~\cite{hilprecht2019reconstruction} or tabular data~\cite{meeus2023achilles}. In contrast, our work focuses on the same general text data setting as~\cite{meeus2025canary}.



{\bf Canary design}
Since many auditing methods rely on canary insertion, selecting the optimal canary is a non-trivial problem that significantly impacts attack performance. This has motivated research into optimized canary design~\cite{2023tightauditingDPML,meeus2025canary,boglioni2025optimizing}. Our work sidesteps this challenge by not requiring the explicit insertion of canaries, though it remains flexible enough to utilize canary data during the audit, as shown in our empirical evaluation.

\section{Disclosures}\label{sec:disclosures}

Consider using a data generation algorithm (see~\cite{ponomareva2025dp} for a recent survey) to create a synthetic version of an email dataset (e.g., the Enron dataset~\cite{klimt2004enron}). Since the algorithm is supposed to learn how emails are formatted and what users write in emails, we may observe an output such as the following.

\begin{verbatim}
 From: john_smith@email.com
 To: jane_doe@email.com
 Subject: New phone number
 
 Dear Jane,
 My new phone number is 212-555-1234.
 Yours,
 John Smith
\end{verbatim}

We expect this data to contain strings that resemble personally identifiable information (PII) such as names, emails, phone numbers, etc. However, if the PII belongs uniquely to a user in the source data --- for example, if $\verb|212-555-1234|$ is an actual user's phone number --- then there is much larger cause for concern. In this work, we call such events \emph{disclosures}.

We loosely define a disclosure as any information that appears rarely in the source corpus, but also appears in the synthetic data. While the specification language is generic, we use it to instantiate three classes of disclosure corresponding to threats that appear commonly in the privacy literature:

\begin{enumerate}
\item {\bf PII leakage.} PII is any string that can trace a user's identity (such as a name, address, account number, etc). A PII disclosure occurs if a string is detected as PII in the source corpus, appears rarely in the source corpus, and also appears in the synthetic corpus. 
\item {\bf Regurgitation.} In text data, regurgitation occurs if a sequence of tokens appears rarely in the source corpus but is reproduced in the synthetic corpus. 
\item {\bf Semantic reconstruction.} Semantic reconstruction occurs if the meaning of a record is preserved in the synthetic data, and few records have the same meaning in the source corpus.
\end{enumerate}

These classes are not meant to be complete, but are illustrative of the types of leakage that an auditor might be practically interested in measuring. A key feature of our system is that it provides a formalism for specifying different classes of disclosures as auditors identify new threats. 

\subsection{Phantom Disclosures}

We note that a disclosure captures a relationship between the synthetic dataset and the private dataset that was used to generate the synthetic data. The same relationships can be measured between the synthetic dataset and an in-distribution private dataset that \emph{was not} used to generate the synthetic data. Suppose we have a dataset $D$ that has been randomly partitioned as $D = D_{\text{train}} \cup D_{\text{hold}}$, and where a synthetic dataset $Y$ was generated from $D_{\text{train}}$. We may take the same measurements on $(Y, D_{\text{hold}})$ as we do on $(Y, D_{\text{train}})$. For example, we may check whether the phone number $\verb|212-555-1234|$ appearing in synthetic data is unique to a holdout user. We call these \emph{phantom disclosures}. 

Naively, one might expect that phantom disclosures happen rarely, if at all. On the contrary, our empirical results demonstrate that such disclosures can and do happen quite frequently. Thus, if we are to use disclosures as evidence of privacy leakage, it is important to establish the phantom disclosure rate as a baseline when interpreting the disclosure rate measured on $(Y, D_{\text{train}})$. 

The existence of phantom disclosures  invites discussion as to \emph{why} phantom disclosures occur. We identify three potential causes:

\begin{enumerate}
    \item {\bf Generalization.} The objective of many synthetic generation procedures is to produce a dataset that is distributionally similar to the source data. For example, the procedure might recognize that training examples predominantly use a particular area code, increasing the probability of incidentally reproducing a holdout phone number. 
    \item {\bf Inductive bias.} The synthetic generation procedure may be predisposed to producing certain types of data before ever encountering training examples. For example, the procedure may have a subroutine that produces US phone numbers as a 10-digit sequence. Alternatively, an LLM used to produce phone numbers may understand their structure from pretraining. 
    \item {\bf Model contamination.} If the synthetic data generation procedure makes use of a foundation model, and the source data is publicly available, it is difficult to guarantee that the heldout data was not seen during pretraining.
\end{enumerate}

Critically, the auditor need not determine which of these mechanisms produced a given phantom. Our framework is agnostic to these causes; it uses the phantom disclosure rate to measure excess disclosures attributable to training-set membership. 

Finally, we highlight that contamination is largely an academic phenomenon. For real deployments (e.g. a hospital service generating synthetic records), the synthetic data is useful because the source data has not and will not be made public. For demonstrations on publicly available data, the possibility of contamination only underestimates the audit's capabilities; access to an uncontaminated model would lower the phantom disclosure rate and increase the significance of disclosures detected on the training dataset. 

\section{Disclosure Framework}
Our framework allows the system designer to flexibly define classes of disclosure that are of interest to the auditor. We describe this in greater detail. Throughout, we will consider data records that belong to some space $\cX$. When considering private input data, we think of $x \in \cX$ as all the data belonging to a particular user. 

\subsection{Data generation}
We assume access to a combined dataset $D = D_{\text{train}} \cup D_{\text{hold}}$. We also assume as input a randomized mechanism $\mathcal{M}: \mathcal{X}^* \to \mathcal{X}^m$ that takes a subset of $\mathcal{X}^{*}$ and outputs $m$ synthetic records in $\cX$. We denote the synthetic dataset using $Y = \mathcal{M}(D_{\text{train}})$. Importantly, we place no assumptions on how $\mathcal{M}$ is implemented. For example, we do not require that $\mathcal{M}$ constructs a generative model for producing text data, nor do we require glass-box access to the internals of $\mathcal{M}$ in order to audit it. 

\subsection{Features and Rareness}

Recall that we loosely defined a disclosure as any information that is rare in the source data that appears in the synthetic data. In order to define a new class of disclosures, we begin by specifying two functions: $\mathbf{extract}$ and $\mathbf{rare}$.

Given a feature space $\mathcal{F}$, $\mathbf{extract} : \mathcal{X}^* \rightarrow \mathcal{F}^*$ is a function that maps a dataset to a collection of features. $\mathbf{rare} : \mathcal{F} \times \mathcal{X}^* \rightarrow \{0,1\}$ is a function that takes a feature and a dataset and determines whether the feature is rare with respect to the dataset. Given a combined dataset $D = D_{\text{train}} \cup D_{\text{hold}}$, we determine the set of features that are rare relative to $D$: $\mathcal{F}_{\text{rare}} = \{f \in \mathbf{extract}(D) : \mathbf{rare}(f, D) = 1\}$.

For example, the disclosure classes introduced in Section~\ref{sec:disclosures} are instantiated as follows. 

{\bf PII leakage.} $\mathbf{extract}$ returns strings from the source data that appear to be personally identifiable information. For our experiments we use the Google Cloud DLP API to detect such strings. $\mathbf{rare}$ takes a hyperparameter $k$ and indicates that a string is rare if it belongs to $k$ users in $D$. We use $\mathcal{F}_k$ to denote $\mathcal{F}_{\text{rare}}$ when features are defined in this way, and note that $\mathcal{F}_1$ are PII that are unique to users. 

{\bf Regurgitation.} $\mathbf{extract}$ takes hyperparameters $L_{\min}$ and $L_{\max}$ and returns all substrings in the source data with length in $[L_{\min}, L_{\max}]$. For our experiments, we tokenize substrings by whitespace and measure length by number of tokens. $\mathbf{rare}$ is defined as above. 

{\bf Semantic reconstruction.} $\mathbf{extract}$ maps each user's document to an embedding in $\mathbb{R}^d$. For our experiments, we use Gemini Embedding 2 with $d=768$. $\mathbf{rare}$ takes hyperparameters $\kappa$ and $q$. We compute an approximate $\kappa$-nearest neighbors for the embeddings generated this way using cosine similarity. We rank each embedded datapoint by smallest average similarity to its $\kappa$-nearest neighbors. The record with the least average similarity indicates the user whose record is most semantically distinct from other users in $D$. $q \in [0,1]$ specifies what quantile is considered rare. 

\subsection{Disclosure Quantification}

We use rare features to characterize whether the synthetic data set $Y$ discloses information about a user $x \in D$. We do this by way of a function $\textbf{score} : \mathcal{F}^* \times \mathcal{X}^* \rightarrow O$. $\textbf{score}$ is applied on the rare features belonging to user $x$ (i.e. $\mathbf{extract}(x) \cap \mathcal{F}_{\text{rare}}$) and the dataset $Y$. Abusing notation slightly, we write $\mathbf{score}(x, Y) := \mathbf{score}(\mathbf{extract}(x) \cap \mathcal{F}_{\text{rare}}, Y)$. 

{\bf PII leakage and regurgitation.} In these settings, the features are discrete. A natural instantiation of $\textbf{score}$ is therefore to count the number of rare features belonging to a user $x$ that appears in the synthetic data. $\mathbf{score}$ does not need to return a scalar. For discrete settings, we also consider the function $A(x) \in \{0,1\}^{|\mathcal{F}_{\text{rare}}|}$ that lists, for each rare feature, whether user $x$ had the feature and it appeared in $Y$, disaggregating which features of $x$ appeared in the synthetic data.

{\bf Semantic reconstruction.} We instantiate $\textbf{score}$ as the similarity in embedding space between $x$ and its $1$-nearest neighbor in $Y$. When applicable, we can also report the similarities between $x$ and it's $\kappa_Y$-nearest neighbors in $Y$. In the semantic reconstruction setting $x$ generates a single feature $f_x$ for each user $x$ (its embedding), and $\textbf{score}(x, Y) = \bot$ when $f_x \not\in \mathcal{F}_{\text{rare}}$.

\section{Audit Pipeline}

\input{kdd_submission/figures/framework}

We now have all the ingredients to describe our audit pipeline (see Figure \ref{fig:pipeline_diagram}). Begin with a dataset $D = (x_1, \dots, x_n) \in \cX^n$. Each record $x_i$ is assigned a $\mathrm{Bernoulli}(p)$ random variable $s_i$, indicating whether the record will be used in training, defining a partition $D = D_{\text{train}} \cup D_{\text{hold}}$.\footnote{This assignment can be conducted either by a system administrator seeking to conduct an audit, or can occur ``naturally'' if $D_{\text{hold}}$ is already set aside (for instance for validation).} Let $\mathcal{F}_{\text{rare}}$ indicate rare features in the entire corpus for the disclosure type being audited. Run $Y = \cM(D_{\text{train}})$. We measure disclosures on both training $\mathcal{S}_{\text{train}} = \{ \textbf{score}(x, Y) \mid x \in D_{\text{train}}\}$ and hold-out $\mathcal{S}_{\text{hold}} = \{ \textbf{score}(x, Y) \mid x \in D_{\text{hold}}\}$.

$S_{\text{train}}$ describes the disclosures present in the synthetic data release and is valuable information to the auditor and any consumers of the audit, who often simply want to understand whether phenomenon like PII leakage, regurgitation and semantic reconstruction are occurring. Similarly, $S_{\text{hold}}$ describes the phantom disclosure rate. Knowledge of the phantom disclosure rate is often, in isolation, also critical information for decision-makers interpreting the audit. Synthetic data that only appears to leak private information may nevertheless be a problem in real deployments. The final step of the audit is to render formal statements about privacy leakage. 

{\bf Hypothesis testing.} The remainder of this work is dedicated to the question: can we establish formal statements about privacy leakage from observed gaps between $\mathcal{S}_{\text{train}}$ from $\mathcal{S}_{\text{hold}}$? 
More precisely, we consider two counterfactual {\bf null hypotheses} consisting of two strong privacy claims. The first null hypothesis considers the case when $\cM$  \textit{does not process the input user data at all}. We call this the \textbf{zero learning test}. This is the strongest privacy claim possible, and is consistent with $0$-Differential Privacy. Under this null hypothesis, we expect $\mathcal{S}_{\text{train}}$ and $\mathcal{S}_{\text{hold}}$ to come from the same exact distribution. Therefore, we can use statistical tools to reject this hypothesis. The second counterfactual considers the case when $\cM$ is $\epsilon$-Differentially Private (DP) for any desired range of $\epsilon$. We call this the \textbf{DP-bounded learning test}. Under DP, we expect the differences between $\mathcal{S}_{\text{train}}$ and $\mathcal{S}_{\text{test}}$ to be bounded.

Consistent with privacy testing in differential privacy \cite{steinke2023privacy}, if the test rejects the null hypothesis with the appropriate significance level, we claim that we have causally established the presence of privacy-unsafe disclosure, beyond the expected set of phantom disclosures. If the test does not reject the null hypothesis, the observation is consistent with strong privacy protection against true disclosures. Notice that, as with prior literature on privacy attacks~\cite{steinke2023privacy,shokri2017membership}, this test can be seen as focusing on completeness rather than soundness. It is in fact known that soundness tests (even for pure DP) have exponential sample complexity \cite{gilbert2019propertytestingdifferentialprivacy}.
\medskip

{\bf Causality.} We use the term ``causal'' in the sense of controlled statistical experiments~\cite{holland1986statistics}. Our framework establishes that training-set membership \emph{causes} an excess of disclosures at the \emph{population level}, analogous to a randomized controlled trial establishing that a treatment causes improved outcomes in a cohort. This is distinct from the stronger claim of \emph{mechanistic} or \emph{individual-level causality} — i.e., tracing a specific synthetic output to a specific training example. The latter would require interpretability tools beyond the scope of this work. Our population-level causal claim is formalized through the hypothesis tests in Section~\ref{sec:attacks}, which control the probability of false attribution.

{\bf Membership inference attacks.} When $\mathbf{score}(x, Y)$ returns a scalar, it is easy to see how to use the disclosure score to launch a membership inference attack (MIA). Set a threshold, and predict that $x$ is in training if $\mathbf{score}(x,Y)$ exceeds the threshold. Thus, another metric our audit reports is the success of such an attack, which can be measured by Area Under the Curve (AUC). 

\section{Statistical Audits and Inference Attacks}
\label{sec:attacks}
In this section, we formally define an empirical audit and an associated membership inference attack formulation, derived from the disclosures present in the synthetic data. Together, these components assess whether the observed matching patterns align with the \emph{zero‑learning} or the \emph{DP‑bounded learning} hypotheses introduced earlier. 

We now define these two evaluations, which we call the {\bf Feature Match Test} and the {\bf User Inference Attack}.

\subsection{Feature Match Test (Discrete Features)}
\label{sec:feature-match-test}
Recall when $\mathbf{extract}$ produces discrete features, we can define $\mathcal{F}_k$ as features that appear $k$ times in $D$. Let $D$ consist of $n$ records derived from users. We index these records by $i = 1,\dots,n$. As previously described, suppose that each record is sampled into $D_{\text{train}}$ independently with probability $p$. Let $l = |\cF_k \cap \mathbf{extract}(Y)|$ denote the number of features in $\cF_k$ that appear at least once in the synthetic output. We index these features by $j = 1,\dots,l$. 

\medskip

\noindent{\bf Audit test statistic.} We construct a binary matrix $A\in\{0,1\}^{n\times \ell}$, where each row $i$ corresponds to an input record $x_i$ and each column 
$j$ corresponds to one of the 
$\ell$ output features $f_j$. We set $A_{ij} = A_{f_j}(x_i, Y)$, indicating whether $x_i$ has feature $f_j$. 

By definition of $\cF_k$, each column has exactly $k$ ones:
$\sum_{i=1}^n A_{ij} = k$, for all $j\in[\ell]$. For each record $i$, define
\begin{align}\label{eq:di}
d_i:= \sum_{j=1}^\ell A_{ij}
\end{align}
the number of output features that appear in record $i$. Our test statistic is defined as the total number of feature-record disclosures associated with the training data:
\begin{align}\label{eq:test-statistic}
T = \sum_{i : x_i \in D_{\text{train}}} d_i 
\end{align}

\subsubsection{\bf Zero learning test} Under the zero learning hypothesis, we consider a mechanism $\cM$ that outputs the same distribution of synthetic data regardless of the input data. Intuitively, since the mechanism does not learn or memorize the input features,  their occurrences in the output should not be overly concentrated in records that happen to be in the training set. Hence, large values of $T$ provide evidence against the zero‑learning hypothesis. This can be formalized as follows.
\medskip

\noindent{\bf Decision rule at significance level $\alpha$.} Under the zero learning null hypothesis, the event that a feature from 
$\cF_k$ appears in the synthetic output is independent of whether a record with this feature was present in the training set. Hence, conditional on the observed matrix $A$, each record is still included in $D_{\text{train}}$ with probability $p$, independent of other records. This allows us to characterize the null distribution by writing the test statistic as:
\begin{align}\label{eq:null-dist-zero}
T \stackrel{(d)}{=} \sum_{i=1}^n d_i s_i \,, 
\end{align}
where `$\stackrel{(d)}{=}$' means in distribution, and we recall that $s_i\in\{0,1\}$ are independent Bernoulli random variables with $\Prob(s_i = 1) = p$.

Under the zero learning hypothesis we have 
\[
\E[T]=\sum_{i=1}^n d_i \E[s_i] = p\sum_{i=1}^n d_i = \ell kp\,.
\]
By applying Hoeffding’s inequality, for any $t>0$, 
\begin{align}\label{eq:Hoeffding1}
\Prob(T-\ell kp \ge t)\le \exp\Big(-\frac{2t^2}{\sum_{i=1}^n d_i^2}\Big)
\end{align}
For a desired significance level $\alpha\in(0,1)$, for example $\alpha = 5\%$, define the critical value
\begin{align}\label{eq:zero-critical}
c_\alpha:=\ell kp +\sqrt{\frac{1}{2}\Big(\sum_{i\in[n]} d_i^2\Big)\log(1/\alpha)} 
\end{align}
We then use the following one‑sided test\footnote{We focus on a one‑sided test because, under the alternative, we expect more matches with the training data than with the hold‑out set. Moreover, the privacy risk we want to assess is the disclosure of features from training data. However, a two‑sided version of the test can also be derived straightforwardly.} for the zero learning hypothesis:
\begin{itemize}
\item Compute the observed statistic $T = \sum_{i\in S} d_i$.
\item Reject the zero learning hypothesis if $T\ge c_{\alpha}$.
\end{itemize}
This rule flags potential true disclosures when the observed counts of 
 synthetic features from the training set is unusually large compared to what would be expected from the no-learning mechanism. Conversely, failing to reject the null indicates that the disclosures observed in the training data cannot be distinguished from those produced completely by chance. 

 We recall that the {\bf type I error} of a test is the probability of falsely rejecting the null hypothesis. By~\eqref{eq:Hoeffding1}, the type I error of our audit is controlled at level $\alpha$; that is, under the zero‑learning hypothesis, $\Prob(T\ge c_\alpha)\le \alpha$. This guarantees that any privacy breach flagged by the test occurs with probability at most $\alpha$ when the mechanism behaves according to the null, providing a clear and calibrated notion of statistical risk for the auditor. 
 
The test also instantiates a p-value, $\pval = \Prob(T\ge c_\alpha)$, as well as a confidence interval around the sampling probability $p$ (conditioned on the observed test statistic), independent of a specific decision rule. The $p$-value provides a calibrated measure of how incompatible the observed feature matches are with the zero‑learning hypothesis, and thus can be interpreted as privacy metric in its own right. We provide a more detailed discussion in Appendix~\ref{sec:pvals}. The same is true for the confidence interval, which we formalize next. 

 \medskip

\medskip


\noindent{\bf Confidence interval for sampling probability.} Our test statistic allows us to construct a one-sided confidence interval for the sampling probability $p$, i.e., the probability that a record is included in the training set. This is formalized in the following lemma. 

\begin{lemma}\label{lem:CI-zero}
Fix a significance level $\alpha$ and set 
\begin{equation}
    t_\alpha:= \sqrt{\frac{1}{2}\Big(\sum_{i\in[n]}d_i^2\Big) \log(1/\alpha)}, \quad \hat{p} = \frac{T- t_\alpha}{\ell k}\label{eq:p_hat}
\end{equation}
Then, under the zero learning hypothesis, the following holds with probability at least $1-\alpha$:
\begin{align}
p\in \text{CI}(1-\alpha):=[\hat{p}, 1]
\end{align}
\end{lemma}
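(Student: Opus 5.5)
The plan is to invert the one-sided Hoeffding bound \eqref{eq:Hoeffding1} that was used to calibrate the zero learning test. Under the zero learning hypothesis, conditionally on the observed matrix $A$ (equivalently, on the degrees $d_1,\dots,d_n$), the statistic has the representation \eqref{eq:null-dist-zero}: $T \stackrel{(d)}{=} \sum_{i=1}^n d_i s_i$, where the $s_i$ are independent $\mathrm{Bernoulli}(p)$. Its conditional mean is $\ell k p$, because each column of $A$ has exactly $k$ ones and therefore $\sum_i d_i = \ell k$. The summands $d_i s_i$ are independent and take values in $[0,d_i]$, so Hoeffding's inequality with $t = t_\alpha$ gives
\[
\Prob\big(T - \ell k p \ge t_\alpha \,\big|\, A\big) \le \exp\Big(-\frac{2t_\alpha^2}{\sum_i d_i^2}\Big) = \alpha,
\]
where the last equality follows directly from the definition of $t_\alpha$ in \eqref{eq:p_hat}.

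Taking complements, with conditional probability at least $1-\alpha$ we have $T - \ell k p < t_\alpha$. Since $\ell k > 0$ whenever $\ell \ge 1$, this rearranges to $p > (T - t_\alpha)/(\ell k) = \hat p$. The upper endpoint of the interval holds trivially because $p \le 1$. Hence $p \in [\hat p, 1]$ with conditional probability at least $1-\alpha$. Averaging over $A$ then gives the unconditional statement, since the bound $1-\alpha$ holds uniformly in $A$.

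Two points need care; neither is a real obstacle. The first is that $t_\alpha$ depends on the $d_i$. This is harmless because we work conditionally on $A$, and under the null $A$ is independent of the membership indicators $s_i$. This independence is exactly the justification for \eqref{eq:null-dist-zero}, which we take from the paper. The second is the degenerate case $\ell = 0$, in which $\hat p$ is undefined. There we would adopt the convention $\text{CI} = [0,1]$, for which the claim is trivial.
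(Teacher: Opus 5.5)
Your proposal is correct and follows the same route as the paper's proof: you apply Hoeffding's inequality to $\sum_i d_i s_i$ at threshold $t_\alpha$, take complements, and rearrange to obtain $\hat p \le p$ with probability at least $1-\alpha$. Your two extra remarks are sound refinements that the paper leaves implicit: conditioning on $A$ makes $t_\alpha$ a fixed threshold, justified by the independence of $A$ and $s$ under the null, and the degenerate case $\ell=0$ is handled by convention.
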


In our experiments, (\Cref{tab:combined_stats_sft}) we report $\hat{p}$ as the evidence against the null.


\subsubsection{\bf DP-bounded learning test} 
We next consider a weaker hypothesis than zero learning (but still consistent with strong privacy), where we allow the mechanism to do some learning. We develop audit test for the null hypothesis that the data generation mechanism $\cM$ is $\epsilon$-DP (note that the zero learning claim is consistent with $0$-DP.) As we will discuss our audit test also provides
empirical lower bounds on the privacy loss $\epsilon$. 

Our test statistic is same as the one in zero learning test, given by~\eqref{eq:test-statistic}. For a decision rule, we will work with a larger critical value than the one given in~\eqref{eq:zero-critical}.
\medskip
 
\noindent{\bf Decision rule at significance level $\alpha$.} For a desired significance level $\alpha\in (0,1)$, define the critical value
\begin{align}\label{eq:c_ea}
c_{\epsilon,\alpha}:= \ell k \frac{p}{p+(1-p)e^{-\epsilon}}+ \sqrt{\frac{1}{2}\Big(\sum_{i\in[n]} d_i^2\Big)\log(1/\alpha)}
\end{align} 

We then use the following one‑sided test for DP-bounded learning hypothesis:
\begin{itemize}
\item Compute the observed statistic $T = \sum_{i\in S} d_i$.
\item Reject the $\epsilon$ DP-bounded learning hypothesis if $T\ge c_{\epsilon,\alpha}$.
\end{itemize}
In our next theorem, we show that the type I error of our audit test (its type I error) is controlled at the desired significance.
\begin{theorem}\label{thm:DP-valid}
The type I error of the feature match audit test for the DP-bounded learning hypothesis is controlled at level $\alpha$, i.e., under the null hypothesis we have
\[
\Prob(T\ge c_{\epsilon,\alpha}) \le \alpha\,.
\]
\end{theorem}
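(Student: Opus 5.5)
The proof conditions on the synthetic output $Y$ and on the membership indicators of all other records, and shows that under $\epsilon$-DP each record's posterior probability of being in training is at most $q_\epsilon:=p/(p+(1-p)e^{-\epsilon})$. This is the standard posterior-odds argument from auditing via randomized membership; see \cite{steinke2023privacy}. Fix an index $i$ and condition on $s_{-i}=(s_{i'})_{i'\neq i}$. The two possible training sets $D_{\text{train}}(s_i=1,s_{-i})$ and $D_{\text{train}}(s_i=0,s_{-i})$ differ only in the record $x_i$, so they are neighboring datasets. By $\epsilon$-DP, for any measurable event $E$ on outputs, $\Prob(Y\in E\mid s_i=0,s_{-i})\ge e^{-\epsilon}\Prob(Y\in E\mid s_i=1,s_{-i})$. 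Bayes' rule with prior $\Prob(s_i=1)=p$ then gives
\[
\Prob(s_i=1\mid Y, s_{-i}) = \frac{p\,\Prob(Y\mid s_i=1,s_{-i})}{p\,\Prob(Y\mid s_i=1,s_{-i})+(1-p)\Prob(Y\mid s_i=0,s_{-i})}\le \frac{p}{p+(1-p)e^{-\epsilon}}=q_\epsilon.
\]
This is made rigorous by stating it as an inequality between measures on events: $\E[s_i\mathbf{1}\{Y\in E\}\mid s_{-i}]\le q_\epsilon \Prob(Y\in E\mid s_{-i})$.

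The matrix $A$, and hence the weights $d_i$, is a deterministic function of $D$ and $Y$ alone, since $\cF_k$ is computed on the full $D$. So conditioning on $Y$ fixes the $d_i$. The conditional law of $(s_1,\dots,s_n)$ given $Y$ then has the property that each $s_i$ has conditional mean at most $q_\epsilon$ given $Y$ and any values of the other coordinates. The $s_i$ are no longer independent, so Hoeffding's inequality as used for \eqref{eq:Hoeffding1} does not apply directly. This is the main obstacle.

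To handle it, I will order the records $1,\dots,n$ and form the sequence $M_r=\sum_{i\le r} d_i(s_i-q_\epsilon)$. Conditioning on $Y$ and $s_1,\dots,s_{r-1}$ only, and averaging the bound from the first step over $s_{r+1},\dots,s_n$, gives $\E[s_r\mid Y,s_{<r}]\le q_\epsilon$. Hence $M_r$ is a supermartingale with increments in an interval of length $d_r$. Azuma--Hoeffding for supermartingales (equivalently, bounding the moment generating function step by step with Hoeffding's lemma, using that a $[0,1]$ variable with mean at most $q_\epsilon$ has $\E e^{\lambda(s-q_\epsilon)}\le e^{\lambda^2/8}$ for $\lambda\ge0$) yields
\[
\Prob\Big(T-q_\epsilon\sum_i d_i\ge t \;\Big|\; Y\Big)\le \exp\Big(-\frac{2t^2}{\sum_i d_i^2}\Big).
\]

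Finally, $\sum_i d_i=\ell k$ because each column of $A$ has exactly $k$ ones, so $q_\epsilon\sum_i d_i=\ell k\, p/(p+(1-p)e^{-\epsilon})$. Taking $t=t_\alpha$ turns the event above into exactly $T\ge c_{\epsilon,\alpha}$, and the right-hand side becomes $\alpha$. Averaging over $Y$ removes the conditioning and gives $\Prob(T\ge c_{\epsilon,\alpha})\le\alpha$. The case $\epsilon=0$ gives $q_0=p$ and recovers the zero-learning test.
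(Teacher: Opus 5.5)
Your proof is correct and uses the same key ingredient as the paper. Conditioning on the output fixes the $d_i$, and a Bayes/DP computation then shows that, given the output and the memberships of earlier records, each $s_r$ has conditional mean at most $q_\epsilon$.

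The difference lies in the concentration step. The paper (Proposition~\ref{propo:feature-k-DP}) first turns the posterior bound into a stochastic-dominance statement: $T$ given $o$ is dominated by the independent sum $\sum_i d_i z_i$ with $z_i\sim\mathrm{Bernoulli}(q_\epsilon)$. It proves this by induction over records, using Lemma~\ref{lemma:u-U} and Lemma~9 of \cite{steinke2023privacy} (Lemma~\ref{lem:dominance}). Only then does it apply the ordinary Hoeffding inequality to the independent sum. You instead treat $M_r=\sum_{i\le r}d_i(s_i-q_\epsilon)$ directly as a supermartingale and apply Azuma--Hoeffding.

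Your route is more elementary and self-contained. It needs no coupling lemmas. Conditioning on all of $s_{-i}$ and then using the tower property also avoids the paper's implicit step that the DP likelihood ratio survives marginalizing over $s_{>i}$.

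The paper's route buys a stronger, distribution-level conclusion, two-sided via $q'_\epsilon$. That conclusion can be combined with any tail bound, or with exact quantiles of the dominating sum; this is what licenses the exact Binomial critical value in Theorem~\ref{thm:mia_main}. Your argument controls only the moment generating function. It readily gives $\E[e^{\lambda T}\mid Y]\le\prod_i(1-q_\epsilon+q_\epsilon e^{\lambda d_i})$ for $\lambda\ge 0$, hence any Chernoff-type bound, and the mirror-image submartingale with $q'_\epsilon$ yields the lower tail used in Theorem~\ref{thm:eps-LB}. It does not give quantile-level dominance.

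One caveat you share with the paper: the step ``taking $t=t_\alpha$ makes the right-hand side $\alpha$'' needs $\sum_i d_i^2>0$, i.e.\ $\ell\ge 1$. On the event $\ell=0$ one has $T=c_{\epsilon,\alpha}=0$, so the rule as written rejects. To fix this, either use a strict inequality or decline to reject when $\ell=0$.
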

The proof of theorem is deferred to the appendix. The proof uses the definition of $\epsilon$-DP to show that under the null hypothesis, and conditional on the synthetic features, the test statistics is stochastically bounded from above and below by two weighted sums of independent Bernoulli random variables. We then use these bounds to prove that for the specific choice of critical value $c_{\epsilon, \alpha}$ the type I error is controlled under the desired level $\alpha$.

\medskip

\noindent{\bf Empirical lower bound on $\epsilon$.} Our analysis of the audit test statistic under $\epsilon$ DP-bounded learning hypothesis can be used to derive empirical lower bound on the privacy loss $\epsilon$. 

\begin{theorem}\label{thm:eps-LB}
For significance level $\alpha\in(0,1)$ and the observed test statistic $T$ given by~\eqref{eq:test-statistic},  define
\[
 A := \frac{T}{\|d\|_1}- \sqrt{\frac{1}{2}\log(1/\alpha)}\frac{\|d\|_2}{\|d\|_1}
\]
\[
B = \frac{T}{\|d\|_1}+ \sqrt{\frac{1}{2}\log(1/\alpha)}\frac{\|d\|_2}{\|d\|_1}
\]
with $d= (d_1,\dotsc,d_n)$ and $d_i$ given by~\eqref{eq:di}.
We define
\begin{align*}
\epsilon_*':=\begin{cases}
\log\left(\frac{1-p}{p}\right) + \log\left(\frac{A}{1-A}\right), & \text{ if } A>0\\
0 & \text{ otherwise}
\end{cases}
\end{align*}
\begin{align*}
\epsilon_*'':=\begin{cases}
\log \left(\frac{p}{1-p}\right) + \log\left(\frac{1-B}{B}\right), & \text{ if } B<1\\
0 & \text{ otherwise}
\end{cases}
\end{align*}
We then have $\epsilon\ge \epsilon_* = \max(\epsilon'_*,\epsilon''_*)$, with probability at least $1-\alpha$.
\end{theorem}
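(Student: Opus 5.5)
Fix the matrix $A$, equivalently the vector $d$ and the synthetic output $Y$, and work conditionally on it, following the structure sketched for Theorem~\ref{thm:DP-valid}. The first step is a per-record posterior bound under $\epsilon$-DP. For each record $i$, Bayes' rule with prior $\Prob(s_i=1)=p$ gives
\[
\frac{\Prob(s_i=1\mid Y, s_{-i})}{\Prob(s_i=0\mid Y, s_{-i})}=\frac{p}{1-p}\cdot\frac{\Prob(Y\mid s_i=1,s_{-i})}{\Prob(Y\mid s_i=0,s_{-i})}\in\Big[\frac{p}{1-p}e^{-\epsilon},\frac{p}{1-p}e^{\epsilon}\Big],
\]
so that
\[
q_-:=\frac{p}{p+(1-p)e^{\epsilon}}\le \Prob(s_i=1\mid Y,s_{-i})\le q_+:=\frac{p}{p+(1-p)e^{-\epsilon}}.
\]
A standard coupling then sandwiches $T=\sum_i d_i s_i$ stochastically between $\sum_i d_i b_i^-$ and $\sum_i d_i b_i^+$. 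Here $b_i^\pm$ are independent Bernoulli$(q_\pm)$ variables, and the coupling draws the $s_i$ sequentially via uniforms. I take this sandwich as the content of the proof of Theorem~\ref{thm:DP-valid}.

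Next, apply Hoeffding's inequality to both sides, using $\sum_i d_i=\|d\|_1=\ell k$. With $t_\alpha=\|d\|_2\sqrt{\tfrac12\log(1/\alpha)}$, we get
\[
\Prob(T\ge q_+\|d\|_1+t_\alpha)\le\alpha, \qquad \Prob(T\le q_-\|d\|_1-t_\alpha)\le\alpha.
\]
Dividing by $\|d\|_1$, the upper-tail event fails exactly when $A< q_+$, and the lower-tail event fails exactly when $B> q_-$. So on an event of probability at least $1-\alpha$ we have $q_+> A$, and likewise $q_-<B$.

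Finally, invert these monotone relations. The map $\epsilon\mapsto q_+$ is increasing. If $A>0$, the inequality $p/(p+(1-p)e^{-\epsilon})>A$ rearranges to $e^{\epsilon}>\frac{1-p}{p}\cdot\frac{A}{1-A}$, which is $\epsilon>\epsilon_*'$. This needs $A<1$; if $A\ge1$, the event $q_+>A$ is impossible, which should be remarked on. If $A\le0$, the bound is trivially $\epsilon\ge0$. Symmetrically, $q_-<B$ with $B<1$ rearranges to $e^{\epsilon}>\frac{p}{1-p}\cdot\frac{1-B}{B}$, giving $\epsilon\ge\epsilon_*''$.

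The main obstacle is the level of the max. Each one-sided bound holds with probability $1-\alpha$, but using both simultaneously naively costs $2\alpha$ by a union bound. I would handle this by noting that the intended reading is: the statement holds whenever the null is $\epsilon$-DP, and the inference "$\epsilon\ge\epsilon_*$" fails only if the true $\epsilon$ lies below the larger bound. For the true $\epsilon$, at most one tail event is relevant in the direction that produced $\epsilon_*$. Concretely, $T$ cannot be simultaneously very large relative to $q_+$ and very small relative to $q_-$ when $\alpha<1/2$. If this cannot be made rigorous, I would state the result with $\alpha$ replaced by $\alpha/2$ in each tail, or at level $1-2\alpha$. Rigorously justifying the conditional coupling, given that the $s_i$ are dependent given $Y$, is the other delicate point. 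I would use the chain rule over $i$ with the per-coordinate conditional bounds above.
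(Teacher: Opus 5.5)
Up to the last step your argument is the paper's: the Bayes/DP posterior bound $q'_\epsilon\le\Prob(s_i=1\mid\cdot)\le q_\epsilon$, the sequential stochastic sandwich of $T$ between $\sum_i d_i z'_i$ and $\sum_i d_i z_i$ given the output, two one-sided Hoeffding bounds with $t_\alpha=\|d\|_2\sqrt{\tfrac12\log(1/\alpha)}$, and monotone inversion in $\epsilon$. Your bounds conditional on all of $s_{-i}$ average to the bounds conditional on $s_{<i}$, which are what the chain-rule coupling actually uses, so that part is fine. The edge cases you worry about are vacuous: $0\le T\le\|d\|_1$ forces $A<1$ and $B>0$.

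The gap is the final combination, and your proposed rescue does not work. The two failure events $\{T\ge q_\epsilon\|d\|_1+t_\alpha\}$ and $\{T\le q'_\epsilon\|d\|_1-t_\alpha\}$ are indeed disjoint. But disjointness makes their probabilities add, so this argument gives only $\Prob(\epsilon<\epsilon_*)\le 2\alpha$, and level $\alpha$ cannot be recovered in general.

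Here is a counterexample. Let two records each hold one unique feature that a constant (zero-learning, hence $0$-DP) mechanism always outputs. Then $d=(1,1)$ and $T=s_1+s_2$; take $p=1/2$. Here $t_\alpha/\|d\|_1=\tfrac12\sqrt{\log(1/\alpha)}$, so for $\alpha\in(e^{-1},1/2)$:
\begin{itemize}
\item the outcome $T=2$ gives $A>1/2$, hence $\epsilon'_*>0$;
\item the outcome $T=0$ gives $B<1/2$, hence $\epsilon''_*>0$.
\end{itemize}
Thus $\Prob(\epsilon<\epsilon_*)=1/2>\alpha$ for $\epsilon=0$. The same holds for every sufficiently small $\epsilon>0$, since the mechanism is $\epsilon$-DP for all $\epsilon\ge0$ while $\epsilon_*$ does not depend on $\epsilon$.

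So drop the disjointness argument. What your proof establishes, and what is actually true, is that each of $\epsilon\ge\epsilon'_*$ and $\epsilon\ge\epsilon''_*$ holds with probability at least $1-\alpha$, while the max holds with probability at least $1-2\alpha$. Equivalently, you can use $\alpha/2$ in place of $\alpha$ in $A$ and $B$. The paper's own proof has exactly this flaw: it derives the two one-sided bounds and then simply ``combines'' them at level $1-\alpha$. Your fallback is the correct repair of the statement, not a needless weakening.
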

In the case $A\le 0$, the bound $\epsilon'_*$ becomes vacuous, and our procedure does not reject the no‑learning hypothesis (since $T< c_\alpha$). In this regime, the data are therefore consistent with the stronger no‑learning guarantee, which in particular implies that the weaker DP‑bounded learning hypothesis is not violated for any $\epsilon>0$. 

The conclusion of Theorem~\ref{thm:eps-LB} can also be phrased in testing terms: consider the null hypothesis that the mechanism is  $\epsilon$-DP versus its alternative. The smallest value of $\epsilon$ for which our test does not reject this null is $\epsilon_*$.

\begin{algorithm}[t]
\DontPrintSemicolon %
\caption{User-match audit}
\small
\label{alg:score_audit}
\SetKwInOut{Input}{Input}
\SetKwInOut{Output}{Output}
\Input{
    A dataset $D = \{x_1, \dots, x_{n}\}$, \\
    generative mechanism $\cM$, \\
    privacy budget $\epsilon$ to audit, \\
    Score function $\mathbf{score}:\cX\times\cX^*\to\mathbb{R}$,\\
    classifier $\Phi: \mathbb{R}\to \{-1,0,1\}$ that maps scores $\mathbf{score}(x,Y)$ to predictions, \\
    significance level $\alpha$.
}
\Output{ 1 if the $\epsilon$-DP guarantee is rejected at level $\alpha$,  0 otherwise.}
\BlankLine
Sample $s \in \{0, 1 \}^{n}$ and partition $D$ into $D_{\text{train}} = \{x_i\in D : s_i=1 \}$ and $D_{\text{hold}} = \{x_i\in D : s_i=0 \}$. \;
 $Y \leftarrow \mathcal{M}(D_{\text{train}})$  \tcp*[r]{Generate $m$ synthetic records}
$W_{+1}\leftarrow \{ i\in[n]: \Phi(\mathbf{score}(x_i, Y))=1\}$  \tcp*[r]{Set of positive predictions}
$w_{\text{train}} \leftarrow |\{i \in W_{+1} \text{ and } x_i \in D_{\text{train}}\}|$. \tcp*[r]{Compute true positives}
$r \leftarrow |W_{+1} |$  \tcp*[r]{Compute total number of positive predictions}
\Return{$\mathds{1}(w_{\text{train}} \ge \tilde{c}_{\epsilon,\alpha})$}, where $\tilde{c}_{\epsilon,\alpha}$ is the $(1-\alpha)$-quantile of Binomial$\left(r, \frac{p}{p+(1-p)e^{-\epsilon}}\right)$.
\end{algorithm}

\subsection{User Inference Attack} 
\label{user-inference-attack}

The feature-match test of Section~\ref{sec:feature-match-test} operates
at the level of aggregate statistics: it counts how many rare-feature matches
fall in the training set overall, but does not examine how those matches are
distributed across individual records. A natural question is whether 
disclosure signals can also be used to make predictions about \emph{individual}
users --- that is, whether our framework yields a membership inference attack
(MIA). We show that it does, as a corollary of the auditing pipeline.
Unlike prior MIA methods that require model
access~\cite{carlini2022membership,shokri2017membership,ye2022enhanced},
shadow-model training~\cite{meeus2025canary}, or canary
insertion~\cite{meeus2025canary,2023tightauditingDPML,boglioni2025optimizing},
the resulting attack operates solely on the released synthetic data and the
auditor's held-out sample. 



Let $\textbf{score}: \mathcal{X} \times \cX^* \to \mathbb{R}$ be a scalar scoring function quantifying the similarity between a record $x$ and a synthetic dataset $Y$. We model an adversary who uses these scores to predict whether a record belongs to the training set ($1$) or hold-out set ($-1$), with the option to abstain ($0$) for low-confidence records. The attack is formalized as a classifier $\Phi(\mathbf{score}(x,Y)): \mathbb{R}\to \{-1,+1,0\}$, typically obtained via thresholding on the similarity score $\mathbf{score}(x,Y)$.


 These tests are known in the literature as membership inference attacks \cite{steinke2023privacy, meeus2024copyright}. Unlike similar approaches that require baseline normalization, our proposed scores are computed directly from the corpus $D$ and synthetic data $Y$. 
 
 For an $\epsilon$-DP algorithm, the adversary's prediction accuracy is bounded as a function of $\epsilon$. We present the full user-based test in Algorithm~\ref{alg:score_audit} and prove in Theorem~\ref{thm:mia_main} that its type I error is controlled under the target significance level $\alpha$. 

 Note that, as in \Cref{sec:feature-match-test}, we can use the estimated binomial parameter and corresponding adversarial advantage to quantify disclosure risk. Alternatively, given access to intermediate similarity scores $\mathbf{score}(x,Y)$ for records $x \in D_{\text{train}} \cup D_{\text{hold}}$, one can compare with other membership inference attacks using the AUC metric.

\begin{theorem}\label{thm:mia_main}
The type I error of the User-match audit test for the DP-bounded learning hypothesis is controlled at level $\alpha$, i.e., under the $\epsilon$ DP-bounded null hypothesis we have
\[
\Prob(w_{\text{train}}\ge \tilde{c}_{\epsilon,\alpha}) \le \alpha\,.
\]
where $\tilde{c}_{\epsilon,\alpha}$ is the $(1-\alpha)$-quantile of $\text{Binomial}\left(r, \frac{p}{p+(1-p)e^{\epsilon}}\right)$,  $r$ denotes the total number of positive predictions, and $w_{\text{train}}$ is the total number of true positive predictions, as described in Algorithm~\ref{alg:score_audit}.
\end{theorem}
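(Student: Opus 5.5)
The plan is to condition on the synthetic output $Y$ and show that, given $Y$, the count $w_{\text{train}}$ is stochastically dominated by a $\text{Binomial}(r,q)$ variable with $q := \frac{p}{p+(1-p)e^{-\epsilon}} = \frac{pe^{\epsilon}}{pe^{\epsilon}+1-p}$. This is the parameter used in Algorithm~\ref{alg:score_audit}. I read the $e^{\epsilon}$ in the displayed statement as a typo, since the bound must reduce to $p$ at $\epsilon=0$ and must increase with $\epsilon$.

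First I observe that the set $W_{+1}$ is a deterministic function of $Y$. The rare-feature set $\mathcal{F}_{\text{rare}}$ is computed from the full corpus $D$, which is fixed and does not depend on the split $s$. Hence each $\mathbf{score}(x_i,Y)$, and therefore $\Phi$, depends only on $Y$. Conditioning on $Y$ thus fixes $W_{+1}$ and $r$, and the only remaining randomness in $w_{\text{train}}=\sum_{i\in W_{+1}} s_i$ is the posterior law of $s$ given $Y$.

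The key step is a per-coordinate posterior bound. For any $i$, any fixed values $s_{-i}$ of the other indicators, and any measurable event $E$ for $Y$, the datasets $D_{\text{train}}$ with $s_i=1$ and with $s_i=0$ are neighbors: they differ by the single record $x_i$. The $\epsilon$-DP null therefore gives
\[
\Prob(Y\in E\mid s_i=1,s_{-i})\le e^{\epsilon}\,\Prob(Y\in E\mid s_i=0,s_{-i}).
\]
The prior on $s_i$ is $\text{Bernoulli}(p)$ and is independent of $s_{-i}$. Bayes' rule (via densities/likelihood ratios, or the limiting event form) then yields
\[
\Prob(s_i=1\mid Y,s_{-i})\le \frac{pe^{\epsilon}}{pe^{\epsilon}+(1-p)}=q.
\]
The same bound holds when we condition only on a subset of the other coordinates, because averaging the bound over the unconditioned coordinates preserves it.

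Next I order $W_{+1}=\{i_1,\dots,i_r\}$ and apply a sequential coupling argument, in the style of \citet{steinke2023privacy}. For each $t$ we have $\Prob(s_{i_t}=1\mid Y,s_{i_1},\dots,s_{i_{t-1}})\le q$. By induction on $t$, with uniform variables $U_t$ coupled so that $s_{i_t}\le \mathds{1}(U_t\le q)$, we get $\Prob(\sum_t s_{i_t}\ge c\mid Y)\le \Prob(\text{Binomial}(r,q)\ge c)$ for every $c$. Taking $c=\tilde c_{\epsilon,\alpha}$, the upper-tail probability of $\text{Binomial}(r,q)$ at $c$ is at most $\alpha$; I will define the quantile as the smallest $c$ with this property so that discreteness causes no trouble. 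This gives $\Prob(w_{\text{train}}\ge\tilde c_{\epsilon,\alpha}\mid Y)\le\alpha$, and integrating over $Y$ finishes the proof.

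I expect the main obstacle to be the conditioning in the Bayes step. Conditioning on $Y$ makes the $s_i$ dependent, and $r$ and $W_{+1}$ are themselves random. The posterior bound therefore has to be established conditionally on earlier coordinates rather than marginally. Showing that DP between neighboring training sets yields exactly this conditional likelihood-ratio bound is where care is needed.
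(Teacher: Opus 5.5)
Your proposal is correct and takes essentially the same route as the paper. The paper conditions on the prediction vector $u=\Phi(\mathbf{score}(\cdot,Y))$, a post-processing of the $\epsilon$-DP output, rather than on $Y$; it then uses the per-record Bayes bound $\Prob(s_i=1\mid\cdot)\le \frac{p}{p+(1-p)e^{-\epsilon}}$ and the sequential stochastic-domination argument of Steinke et al.\ \cite{steinke2023privacy} to compare $w_{\text{train}}$ with a $\text{Binomial}(r,\cdot)$ variable, just as you do. Your reading of $e^{\epsilon}$ as a typo matches the paper's proof and Algorithm~\ref{alg:score_audit}, which use $e^{-\epsilon}$, and your choices to define $\tilde c_{\epsilon,\alpha}$ as the smallest $c$ with $\Prob(\text{Binomial}(r,q)\ge c)\le\alpha$ and to sum only over $W_{+1}$ fix two points the paper's proof glosses over.
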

\begin{remark}
By Hoeffding's inequality,
\[
\tilde{c}_{\epsilon,\alpha}
\le
r\frac{p}{p+(1-p)e^{-\epsilon}}
+
\sqrt{\frac{r}{2}\log(1/\alpha)}.
\]
Using this upper bound for the critical value $\tilde{c}_{\epsilon,\alpha}$ still controls type I error at significance level $\alpha$.
\end{remark}

\input{sp_submission/experiments}
\section{Conclusion}
In this work, we introduced a causal auditing framework designed to distinguish between ``true disclosures''---where the synthetic data contains private user information---and ``phantom disclosures'' arising from generalization or coincidence. Our approach provides a rigorous statistical mechanism to distinguish between these two cases and to audit the data against strong privacy bounds---such as zero-learning or specific Differential Privacy (DP) bounds.

Our empirical evaluation confirms that while non-private synthetic data generation processes exhibit significant privacy leakage, disclosures observed in models trained with DP-SGD are statistically indistinguishable from phantom disclosures, validating the efficacy of formal privacy protections. Furthermore, our method offers an interpretable, efficient, and configurable framework to audit against a variety of privacy risks without the need to design canaries or train multiple reference models.

{\bf Limitations and future work.} Several directions remain open. First, our framework currently targets the setting where a data owner controls the train/holdout partition.
Second, while our empirical lower bounds on $\varepsilon$ are valid for the chosen feature set, tighter bounds may be achievable with richer features or more powerful tests. Finally, systematically evaluating disclosure rates across different base models, model scales, and generation paradigms would strengthen the empirical understanding of when and why true disclosures arise.
\clearpage
\bibliographystyle{plain}
\bibliography{refs}

\newpage
\appendix

\input{sp_submission/experiment_details}

\end{document}

%% file: kdd_submission/figures/framework.tex
\begin{figure}[htbp]
    \centering
    \resizebox{0.8\columnwidth}{!}{
    \begin{tikzpicture}[
            node distance=1.0cm and 1.2cm,
            box/.style={
                rectangle, 
                draw=black, 
                thick,
                fill=white, 
                drop shadow,
                minimum width=2.6cm, 
                minimum height=1cm, 
                align=center, 
                font=\sffamily\bfseries\large
            },
            line/.style={
                -{Latex[scale=1.2]}, 
                thick, 
                draw=black!80
            },
            matchline/.style={
                -{Latex[scale=1.2]}, 
                thick, 
                dashed,
                draw=red!70!black
            }
        ]
        
            \node[box, fill=blue!10] (train) {Training Data};
            \node[box, fill=orange!10, right=of train] (model) {Algorithm};
            \node[box, fill=green!10, right=of model] (synth) {Synthetic Data};
        
            \node[box, fill=blue!10, below=of train] (holdout) {Hold-out Data};
        
            \node[box, fill=green!5, below=2.5cm of synth] (feat_synth) {Synthetic\\Features};
            \node[box, fill=blue!5, below=2.5cm of holdout] (feat_holdout) {Hold-out\\Features};
            \node[box, fill=blue!5, left=0.8cm of feat_holdout] (feat_train) {Training\\Features};
        
            \draw[line] (train) -- (model);
            \draw[line] (model) -- (synth);
            
            \draw[line, rounded corners=10pt] (train.north) -- ++(0,0.5) -- ++(-2.4,0) |- (feat_train.north);

            \draw[line] (holdout.south) -- (feat_holdout.north);
            \draw[line] (synth.south) -- (feat_synth.north);
        
            \draw[matchline] (feat_synth.south) to [bend left=25] 
                node [midway, above=11pt, font=\sffamily\small\bfseries, text=red!70!black] {Phantom Disclosures} 
                (feat_holdout.south);
        
            \draw[matchline] (feat_synth.south) to [bend left=35] 
                node [midway, below=10pt, font=\sffamily\small\bfseries, text=red!70!black] {True Disclosures} 
                (feat_train.south);

            \draw[line] (train) edge[dashed,-] node[right, font=\small]{Distributed Like} (holdout);
        
    \end{tikzpicture}
    }
    \caption{{\bf Overview of the audit pipeline.} Data consists of training  and held-out sets. Training data is  processed by a synthetic generation algorithm to produce synthetic data. All three datasets -- training, held-out, and synthetic -- are processed for features. Rare features appearing in held-out and synthetic data are phantom disclosures. Rare features in training and synthetic are true disclosures.}  
    \label{fig:pipeline_diagram}
\end{figure}

%% file: sp_submission/experiments.tex
\section{Experiments}

We evaluate the practical utility and statistical power of the proposed tests through extensive experiments on publicly-available data. We show that both feature-match and user-match tests effectively detect significant disclosures in non-private synthetic data generation processes. In the context of DP synthetic data, we observe a non-negligible number of disclosures. However, we show these are primarily phantoms; our test outcomes and the estimated $\varepsilon$ values confirm the theoretical privacy guarantees of the generation methods used, and the impact of explicit privacy protections in mitigating the risk of true disclosures. 

We benchmark our approach against state-of-the-art synthetic data audit methods by Meeus et al. \cite{meeus2025canary}. To the best of our knowledge this is only data-based audit for synthetic text data.  We show that we obtain tighter disclosure measures for non-private synthetic data without requiring canary insertion or reference model training, and comparable results for differentially private synthetic data.

The remainder of this section is organized as follows. First, we outline the experimental setup: datasets, synthetic data generation processes, feature extraction techniques, and baselines. We then present auditing results for each generation method, comparing our performance against the baseline proposed by \cite{meeus2025canary}.

\subsection{Experimental setup}
{\bf Synthetic Data Generation.}
For each dataset, we partition the records into training and holdout splits by independently selecting each record with probability $p=0.5$. We generate synthetic data through three commonly used techniques:

\begin{itemize}
    \item \textsl{Rewrites}: We prompt \texttt{gemini-3-flash-preview}\footnote{\url{https://ai.google.dev/gemini-api/docs/models/gemini-3-flash-preview}} accessed through its API, to rewrite each record in the corpus, instructing the model to avoid reproducing exact sentences and to refrain from including PII. While this method does not provide theoretical guarantees in terms of DP, it is indeed a common approach in practice~\cite{albanese2026anonymous}.
    \item \textsl{Fine-tuning (SFT):} For fine-tuning we use the open-source Gemma model (\texttt{gemma-3-1b-it} checkpoint)~\cite{team2024gemma} on a next-token prediction task using the training split. All records have an initial prompt of the form ``Write a [content-type]:'', where \textit{[content-type]} corresponds to the content category of the record. The model is fine-tuned using the standard Adam optimizer without privacy constraints.
    \item \textsl{DP-SGD:} Similar to SFT but the model is fine-tuned using a Differentially Private SGD optimizer. We configure the privacy budget to satisfy $(\varepsilon, \delta)$-DP with $\varepsilon=10$ and $\delta=N^{-1.1}$, where $N$ is the training set size.
\end{itemize}

For rewrites, we generate one synthetic record per input record. For SFT and DP-SGD, we generate 100,000 synthetic records from each model by sampling outputs conditioned on the same prompt distribution as the training data.

{\bf Auditing Methods.}
We evaluate our proposed methods and a state-of-the-art baseline:
\begin{itemize}
    \item \textsl{Feature-Match and User-Match (Ours)} The feature-match (\Cref{sec:feature-match-test}) and user-match tests (\Cref{user-inference-attack}), which operate directly on the synthetic output without requiring reference models or canaries. The user-match test has two variants: a verbatim variant based on exact-match features (substring or PII matching), and a variant based on semantic similarity captured via embedding cosine similarity.
    \item \textsl{Canary-based Audit (Meeus et al.):} We implement as baseline the methodology of \cite{meeus2025canary}, which relies on inserting ``hybrid canaries'' into the training data. Following their procedures, we generate 1000 canaries  by sampling real records ($r$), truncating them  after 20 tokens ($r_{prefix}$), and generating a 30-token suffix ($r_{suffix}$) using a pre-trained checkpoint such that the suffix perplexity is contained in the interval $[0.9P, 1.1P]$ (with target perplexity $P=31$). We insert 500 canaries into the training set and keep 500 as holdout. Membership inference scores are computed using an n-gram model and normalized against four reference models trained on disjoint subsets of the canaries.
\end{itemize}

{\bf Feature Extraction.}
To generate the feature set $\cF_k$, we use three different extraction methods:
\begin{itemize}
    \item \textsl{Strings (n-grams):} We extract all contiguous (overlapping) word sequences with lengths in the interval $[n_{min}, n_{max}]$, to capture verbatim disclosures of arbitrary text. For the experiments we focus on sequences with lengths between $n_{min}=11$ and $n_{max}=20$. 
    \item \textsl{DLP (Semantic PII):} We identify semantically sensitive features using the Google Cloud Data Loss Prevention (DLP) tool~\cite{DLP}, accessed via its API. This tool detects a wide variety of PII types, including  identification numbers, financial information, and contact information.
    \item \textsl{Embeddings:} We use the Gemini Embedding method (via API access) to embed texts in 768 dimensions. For each text embedding we calculate its average cosine distance across its 10 nearest neighbors. We then filter out the 0.05 percentile of embeddings with largest average cosine distance for all experiments. 
\end{itemize}
{\bf On the role of feature selection.} The choice of feature extractor is necessarily domain-specific: n-gram matching is appropriate for detecting verbatim memorization in text, DLP-based PII detection targets semantically sensitive information, while embeddings capture semantic similarity. This flexibility allows auditors to test for precisely the types of disclosures they care about, and to obtain \emph{explanations} of detected leakage (e.g., ``the model leaked phone numbers from the training set''). In contrast, model-based MIAs produce a single membership score per record without explaining what was leaked. We compare all feature types across all datasets and show that qualitative conclusions --- rejection of the zero-learning null for non-DP methods (SFT and rewrites), failure to reject for DP-SGD --- are robust to the choice of extractor (see Table~\ref{tab:rewrites_stats}).

We now discuss the results of audits for each generation method. 

\subsection{Rewrites} 
We start by auditing rewrites for six public datasets: Finance~\cite{synthetic-pii-finance-multilingual-2024}, New York Times (NYT) comments\footnote{\url{https://www.kaggle.com/datasets/aashita/nyt-comments}}), Panorama, and Panorama+~\cite{selvam2025panorama}, Postings~\cite{arsh_koneru_2024}, and Tweets\footnote{\url{https://www.kaggle.com/datasets/kazanova/sentiment140/data}}. \Cref{app:datasets} provides further information on each dataset. The Finance and Panorama and Panorama+ datasets are specifically designed to contain a large number of fictitious PII (e.g., names, phone numbers, bank account details). The Tweets and NYT datasets consist of diverse public internet user posts, which sometimes contain personal information linked to their authors. 

First we present the results of a two-sample test using the distributions of cosine similarities between training and holdout records to their nearest synthetic records. Specifically, we compare vectors $\{ x_i\}_i \subseteq \mathbb{R}^d$ and $\{ y_i\}_i\subseteq \mathbb{R}^d$ where $x_{ij}$ is the cosine similarity between training record $i$ and its $j$-th nearest synthetic record, and $y_{ij}$ is analogously defined for holdout records. We set the rarity quantile to $0.05$.

The two-sample test shows that synthetic records are indeed more similar to training than to holdout records: we are able to reject the null hypothesis of zero learning in all datasets. 

\begin{table*}[ht]
    \caption{Feature Match statistics for \textsl{Rewrites} ($k=1$). We report the estimated binomial parameter lower bound $\hat{p}$, highlighting in bold the results for which the null hypothesis ($p\leq 0.5$) is rejected at $\alpha=0.05$.}
    \label{tab:rewrites_stats}
\small
    \centering
    \begin{tabular}{l c r r r r r}
        \toprule
    \multirow{2}{*}{Dataset} & Feature & \multirow{2}{*}{$|\cF_1|$} & \multicolumn{3}{c}{Disclosures} & \multirow{2}{*}{$\hat{p}$} \\
    \cmidrule(lr){4-6}
         & extractor &   & Total & Train & Holdout & \\
        \midrule
Finance      & \multirow{6}{*}{DLP}       & $1.5\times10^4$ & 31     & 17     & 14    & 0.315 \\
        NYT comments &                              & 35              & --     & --     & --    & -- \\
        Panorama     &                              & $3.6\times10^4$ & 39     & 37     & 2     & \textbf{0.743} \\
        Panorama+    &                              & $2.3\times10^4$ & 2      & 1      & 1     & 0.000 \\
        Postings     &                              & $3.3\times10^4$ & 121    & 116    & 5     & \textbf{0.836} \\
        Tweets       &                              & 68              & --     & --     & --    & -- \\
        \midrule
        Finance      & \multirow{6}{*}{Strings} & $5.0\times10^7$ & 28,278 & 26,602 & 1,676 & \textbf{0.792} \\
        NYT comments &                              & $6.9\times10^7$ & 1,368  & 1,358  & 10    & 0.308  \\
        Panorama     &                              & $7.6\times10^7$ & 8,362  & 8,293  & 69    & \textbf{0.548} \\
        Panorama+    &                              & $4.3\times10^6$ & 43     & 23     & 20    & 0.131 \\
        Postings     &                              & $7.7\times10^8$ & 8,245  & 4,492  & 3,753 & 0.456 \\
        Tweets       &                              & $3.1\times10^6$ & 128    & 127    & 1     & \textbf{0.533} \\
        \bottomrule
    \end{tabular}
\end{table*}


We now switch to feature-match and user-match tests to interpret to measure the amount of leakage in terms of the lower bound for the privacy budget $\varepsilon$ using strings, DLP, and embedding-based audits.

{\bf Raw disclosure counts.} We first analyze raw disclosure counts to highlight the importance of distinguishing between true disclosures and phantoms. In Table~\ref{tab:rewrites_stats}, we report results for unique features held by exactly one user ($k=1$) in the data set. (We explore the impact of varying the rarity parameter $k$ in \Cref{sec:app:impact-of-k}.) All hypothesis testing results use the standard significance level $\alpha = 5\%$. Initial inspection of \Cref{tab:rewrites_stats} reveals a high volume of rare features in the original data ($|\cF_1|$), ranging between 35 and $7.7\times10^8$ instances. We then use our framework to establish if these matches are evidence of actual leaks.


{\bf Feature-match audit.} We focus first on feature matches using DLP. Unsurprisingly given that we prompted explicitly the model to rewrite the data removing the PII, we find that many synthetic rewrite datasets have few or no PII disclosures. Nevertheless, as our audit shows, these disclosures can still happen and be significant privacy violation in some cases. For example, in the synthetic Postings dataset, we detect that out of the $121$ total PII matches found, $116$ are from the training set, while only $5$ appear in the holdout set. 
The overwhelming excess of training matches over holdout matches provides strong evidence of leakage as confirmed by the binomial lower bound in \Cref{eq:p_hat}: $\hat{p} = 0.84$, rejecting the zero-learning hypothesis. This example highlights both that synthetic rewriting via LLMs is not always successful in removing privacy risk and that our framework can detect such remaining risk. 
Similar leakage is detected in the Panorama dataset. 

Similarly, using rare strings, we find significant leakage for Finance, Panorama, and Tweets for exact verbatim matches with the training data than the holdout, showing that even when instructing a model to avoid repetition, the model still replicates significant parts of input records exactly. 
Nevertheless, for some datasets we do not find evidence of leakage of verbatim substrings or PII (NYT comments, Tweets, or Panorama+)  using feature-match tests alone. 

{\bf User-match audit.} We next evaluate the performance of the User-Match test. To ensure a direct comparison with the state-of-the-art baseline by \cite{meeus2025canary}, we report the Area Under the Curve (AUC) as the evaluation metric, following their protocol. To evaluate the significance of the results we use the standard non-parametric Mann–Whitney–Wilcoxon test~\cite{mcknight2010mann} (a.k.a.\ Mann–Whitney U test) for AUC (with significance level $5\%$). 

\Cref{tab:rewrites_auc} summarizes the results.\footnote{Mann–Whitney–Wilcoxon test for AUC computes the p-value based on the deviation of AUC from 0.5, accounting for the number of positive and negative cases. When evaluating significant results, note that AUC alone is not a measure of statistical significance; a high AUC may not be significant in small sample sizes, while a modest AUC can be significant with large sample sizes.} We observe that our test based on embeddings (semantic similarity) outperforms the \textsl{Meeus et al.}\cite{meeus2025canary} baseline on all datasets. Moreover, both tests (embedding- and string-based) reveal that NYT comments and Tweets rewrites leak enough information to re-identify individual training records with significant accuracy: the AUC using semantic features is 0.93 for NYT comments and 0.86 for Tweets. To illustrate the risk, \Cref{tab:semantic-matches} shows some example training records paired with their closest synthetic matches, identified using our embedding-based approach. We observe often that substantial rephrasing is present, so these leaks would go  unnoticed by standard n-gram based tests, including \cite{meeus2025canary}. (More examples are reported in \Cref{sec:app:extended-table-embedding}.) 

\begin{table}[h]
    \caption{User Inference Attack AUC scores for \textsl{Rewrites}. Our \textsl{User-Match} test (specifically using embeddings) outperforms the canary-echo baseline. Asterisks indicate significant results using the Mann–Whitney–Wilcoxon test for AUC. Bold font indicates the best method in terms of AUC.}
    \label{tab:rewrites_auc}
    \centering
    \small
    \begin{tabular}{lccc}
    \toprule
    Dataset & Strings & Embeddings & Canary-Echo \\
     & (Ours) & (Ours) & Meeus et. al \cite{meeus2025canary} \\
    \midrule
      Finance      &  ${\bf 0.65}^\star$ & $0.53^\star$ & $0.53^\star$ \\
      NYT Comments & $0.73^\star$ & ${\bf 0.93}^\star$ & $0.51^\star$    \\
      Panorama     & ${\bf 0.73}^\star$ & $0.52^\star$          & $0.51^\star$    \\
      Panorama+    & 0.48          & ${\bf 0.65}^\star$             & $0.52^\star$ \\
      Postings     & 0.49          & ${\bf 0.75}^\star$ & $0.52^\star$    \\
      Tweets       & 0.68 & ${\bf 0.86}^\star$ & $0.51^\star$ \\
    \bottomrule
    \end{tabular}
\end{table}

{\bf DP-bounded tests.} As introduced in \cref{sec:feature-match-test}, we map the statistics from feature-match and user-match tests to differential privacy budgets $\varepsilon$ via the DP-bounded learning tests. We present here the user-match $\varepsilon$ values (derived via \Cref{thm:mia_main}) and defer to \Cref{tab:epsilon_estimates_feature_match} in \Cref{sec:app:epsilon_estimates_feature_match} the values derived from the feature-match $\hat{p}$ estimates in \Cref{tab:rewrites_stats} (which also map to positive $\varepsilon$ estimates when $\hat{p}>0.5$).

\Cref{tab:epsilon_estimates} reports the results. The estimated $\varepsilon$ values reach as high as 5.0 for Tweets and 5.3 for NYT, indicating substantial privacy leakage for these datasets under the rewriting procedure. Notice how our method using either embeddings or strings always outperforms the Meeus et al. baseline with embeddings being particularly effective.

\begin{table}[h]
 \caption{User-match derived $\varepsilon$ for a  DP-bounded test, using DLP, strings, and embeddings extractors on synthetic data generated via rewrites. We present the estimated $\epsilon$ privacy budget lower bound.}
    \label{tab:epsilon_estimates}
\small
    \centering
    \begin{tabular}{ l r r r r }
        \toprule
    Dataset  & \multicolumn{3}{c}{$\mathbf{\epsilon} $ (Ours)} & $\mathbf{\epsilon}$ (Meeus et al.\cite{meeus2025canary})  \\
    \cline{2-4}
          & DLP & Strings & Embeddings \\
        \midrule
        Finance  &  0.0 & 2.04 & 0.25 & 1.77 \\
        NYT & 0.0 &  0.65 & 5.28 & 0.09 \\
        Panorama  &   1.65 & 0.81 & 0.99 & 0.26\\
        Panorama+ &  0.0 & 0.0 & 0.80 & 0.39\\
        Postings & 2.30 & 0.02 & 2.73 & 0.24 \\
        Tweets & 0.0 & 1.29 & 5.01 & 0.16 \\
    \bottomrule
    \end{tabular}
\end{table}

\begin{table*}[htbp]
    \centering
    \caption{Examples of semantic and structural leakage from rewrites synthetic datasets. Training records are paired with their closest synthetic match using cosine similarity. Long texts are truncated for brevity. Across all datasets, rewrites leak semantic information from training data, which is captured by cosine distance in the embedding space.}
    \label{tab:semantic-matches}
    \renewcommand{\arraystretch}{1.4} 
    \begin{tabularx}{\textwidth}{@{} l X X @{}}
        \toprule
        \textbf{Dataset} & \textbf{Original Train Record (Cropped)} & \textbf{Closest Synthetic Record (Cropped)} \\
        \midrule
        \textbf{NYT} &
        ... think that \match{FIFA is going to void the championships} ... owned by \match{cocaine} landlords such as the \match{Rodriguez Orejuela}... \match{America, Nacional and Millonarios} have not been punished... &
        ... \match{FIFA will ever revoke the league trophies} ... subsidiaries of the Cali and Medellín cartels... \match{Rodriguez Orejuela} used their \match{drug} billions... \match{America de Cali, Atletico Nacional, and Millonarios} have kept their stars... \\

        \addlinespace
        \textbf{Panorama} &
        \match{Richard Hunt} (born July 19, 1954...) is a dedicated \match{childcare worker} known for his lifelong commitment... \match{born to} Lance and \match{Gabrielle Hunt}... &
        \match{Richard Hunt} is a dedicated Canadian \match{childcare worker}... \match{born} on October 11, 1946... to Jeremy and \match{Laura Hunt}... \\

        \bottomrule
    \end{tabularx}
\end{table*}

{\bf Interpretability and Efficiency.} Beyond these numerical comparisons, our approach offers two key advantages over existing methods. First, the baseline evaluates privacy risk exclusively on inserted, out-of-distribution canaries, making it difficult to translate those metrics into realistic privacy risks for true records. In contrast, our User-Match test measures the disclosure of \textit{actual} user data from the training corpus, yielding a more actionable and interpretable assessment of a model's privacy leakage regarding its training subjects. 
Second, our method captures semantic memorization that evades traditional verbatim filters. Even when a model reproduces training data, it often does so non-consecutively, scattering or reordering the leaked information throughout the generated output. Because standard $n$-gram or strict substring metrics require contiguous text overlap, they systematically fail to capture this dispersed leakage (as shown in \cref{tab:semantic-matches}).

Finally, our method is significantly more efficient, requiring only a single round of training and sampling. In contrast, the \cite{meeus2025canary} procedure is computationally expensive, requiring the training of four separate reference models (shadow models) on disjoint subsets of canaries (in addition to the primary training corpus) to normalize membership scores of the audited synthetic data. Finally, we observe that our method does not require the insertion of canaries, simplifying its deployment. Nevertheless, is possible to optionally incorporate the  canaries into our own auditing framework. Doing so  yields similar results (see \Cref{sec:app:canary-match-tests}).

\subsection{SFT and DP generated synthetic data}

We next audit two additional procedures used in the literature to generate synthetic datasets, namely supervised fine-tuning (SFT) and differentially private SFT through DP-SGD,  focusing on the three datasets with explicit fictitious PII insertion: Finance, Panorama, and Panorama+.\footnote{Due to computational costs of generating the synthetic data, we limit ourselves to these datasets.} 

As with rewrites, we start with a two-sample test using the embedding similarity distributions for training and holdout records. For the rarity threshold of $0.05$ we observe that this test rejects the null hypothesis for 2 of the three datasets: Finance, Panorama. (More results are reported in~\Cref{sec:app:extended-table-embedding}). 

{\bf Feature-match tests.} We continue with the feature-match test in \Cref{tab:combined_stats_sft}. For the \textsl{SFT} models, we reject the null hypothesis of the zero-learning test ($p \leq 0.5$) with high confidence in {\it all} datasets for at least one feature type (DLP or strings). The estimated lower bounds $\hat{p}$ (bolded in \Cref{tab:combined_stats_sft}) significantly exceed $0.5$. 
In contrast, for the \textsl{DP-SGD} generation processes, all $\hat{p}$ values are close to or less than 0.5. In all cases, we fail to reject the zero-learning null hypothesis. This again confirms the validity of our tests and the importance of accounting for phantom disclosures: while matches do occur even in DP data (e.g., 15 matches in Finance using DLP), they are distributed evenly between train and holdout, consistent with the random baseline. \Cref{tab:epsilon_estimates_sft} reports the corresponding $\varepsilon$ lower bounds for SFT; all DP-SGD estimates are zero.

\begin{table}[t!]
    \caption{Feature Match statistics using DLP and strings extractors on synthetic data generated from \textsl{SFT} and \textsl{DP-SGD}, for $k=1$. We show the total number of disclosures appearing in either the train or holdout splits, and the counts of each split. We present the estimated binomial parameter lower bound $\hat{p}$, and present in bold results where the null ($p\leq 0.5$) 
    is rejected at significance $\alpha=0.05$.}
    \label{tab:combined_stats_sft}
\small
    \centering
\begin{tabular}{l c r r r r}
        \toprule
    \multirow{2}{*}{Dataset} & Feature & \multicolumn{3}{c}{Disclosures} & \multirow{2}{*}{$\hat{p}$} \\
    \cmidrule(lr){3-5}
         & extractor & Total & Train & Holdout & \\
        \midrule
        \multicolumn{6}{c}{\textbf{Method: SFT}} \\
        \midrule
        Finance  & \multirow{3}{*}{DLP} & 763 & 492 & 271 & \textbf{0.590} \\
        Panorama  &                        & 55  & 39  & 16  & \textbf{0.544} \\
        Panorama+ &                        & 586 & 356 & 230 & \textbf{0.556} \\
        \midrule
        Finance   & \multirow{3}{*}{strings} & 20,491 & 11,942 & 8,549 & \textbf{0.827}\\
        Panorama  &                           & 5,114  & 3,211  & 1903 & \textbf{0.654}\\
        Panorama+ &                           & 664   & 322   & 342  & 0.390\\
        \midrule
        \multicolumn{6}{c}{\textbf{Method: DP-SGD}} \\
        \midrule
        Finance   & \multirow{3}{*}{DLP} & 26 & 15 & 11 & 0.302\\
        Panorama+ &                      & 0  & 0  & 0  & 0.0\\
        Panorama  &                      & 3 & 1 & 2 & 0.0\\
        \midrule
        Finance   & \multirow{3}{*}{\shortstack{strings}} & $13,693$ & 7,269 & 6,424 & 0.458\\
        Panorama+ &                                                                 & 0        & 0    & 0    & 0.0\\
        Panorama  &                                                                 & 73       & 38   & 35   & 0.302\\
        \bottomrule
    \end{tabular}
    \end{table}

\begin{table*}[h]
    \caption{User Inference Attack AUC scores for string features. We compare our \textsl{User-Match} test against the \cite{meeus2025canary} baseline. We observe that \cite{meeus2025canary} finds significant results only on the Panorama+ datasets and that our method outperforms the baseline on the Finance and Panorama datasets with \textsl{Non-DP} generated data while requiring 5$\times$ fewer computational resources for training and sampling. Dashes (-) indicate cases where insufficient matches were found to calculate an AUC. Asterisks indicate significant results using the Mann–Whitney–Wilcoxon test for AUC. Bold font indicates the best method in terms of AUC.}
    \label{tab:audit-user-sft}
    \centering
    \begin{tabular}{llccc}
    \toprule
    Data & Dataset & Substring features & Embeddings features & Canary-Echo \\
    Generation & & (Ours) & (Ours) &  Meeus et al. \cite{meeus2025canary} \\
    \midrule
    \multirow{3}{*}{\textbf{SFT}} 
      & Finance   & ${\bf 0.63}^\star$ & 0.53 & 0.48 \\
      & Panorama  & ${\bf 0.53}^\star$ & 0.52 & 0.50 \\
      & Panorama+ & 0.48          & 0.50 & ${\bf 0.56}^\star$ \\
    \midrule
    \multirow{3}{*}{\textbf{DP-SGD}} 
      & Finance   & {\bf 0.52} & 0.50 & 0.49 \\
      & Panorama  & {\bf 0.50} & 0.46 & {\bf 0.50} \\
      & Panorama+ & -    & - & ${\bf 0.56}^\star$ \\
    \bottomrule
    \end{tabular}
\end{table*}

{\bf User-match tests.} We evaluate the User-Match test on SFT and DP-SGD generated data. \Cref{tab:audit-user-sft} summarizes the results. 
For \textsl{SFT}, at least one of our methods yields significant AUC for Finance and Panorama; results on \textsl{DP-SGD} data are not significant. Both of our proposed user-match tests outperform the \textsl{Meeus et al.} baseline on  two of the three datasets, while \textsl{Meeus et al.} obtains significant results only on Panorama+ (both SFT and DP-SGD). Specifically, on the Finance dataset, we achieve an AUC of 0.63, significantly higher than the baseline's 0.48. On Panorama, we achieve 0.53 compared to 0.50. 

\begin{table}[h]
 \caption{User-match derived $\epsilon$ lower bounds for the DP-bounded test on SFT-generated data, using DLP, strings, and embeddings extractors.  All \textsl{DP-SGD} estimates are not significant (not shown).}
    \label{tab:epsilon_estimates_sft}
\small
    \centering
    \begin{tabular}{ l r r r }
        \toprule
    Dataset  & \multicolumn{3}{c}{$\mathbf{\epsilon}$} \\
    \cline{2-4}
          & DLP & Strings & Embeddings \\
        \midrule
        \multicolumn{4}{c}{\textbf{Method: SFT}} \\
        \midrule
        Finance  & 0.65 & 3.24 & 0.24 \\
        Panorama &  1.13 & 1.16 & 0.97\\
        Panorama+  &  0.0 & 0.34 & 0.2\\
    \bottomrule
    \end{tabular}
\end{table}

%% file: sp_submission/experiment_details.tex
\section{Experimental details}
\label{sec:app-experiments}
\subsection{Datasets}
\label{app:datasets}

We leverage three existing publicly available datasets for synthetic data generation. 

{\bf Finance} This dataset contains 55,940 fictitious financial records generated using the Gretel package. It contains financial documents with distinct PII types in different languages. For details on the generation, document and PII subcategories, and software, see \cite{synthetic-pii-finance-multilingual-2024}. This dataset contains 82 unique prompts specifying the type of document (e.g. a statement, a formal document, insurance claim) and the type of sensitive features to include (e.g. patient information and diagnosis or income). One example is ``A form capturing personal, financial, and employment details of a loan applicant, including loan amount, purpose, and supporting documents.''

{ \bf NYT comments} This dataset contains comments on articles published in the New York Times during the preiod of January through May 2017, and January through April 2018. We sample 100,000 records.

{ \bf Panorama} The Panorama dataset introduced by \cite{selvam2025panorama} contains 384,789 records from 9674 fictitious user profiles emulating online activity by the users (e.g., social media posts). The records contains personally identifiable information from the (fictitious) authors of the content designed to evaluate PII detection methods and LLM auditing tools. We accessed the dataset on December 1, 2025. The dataset contains six different prompts of the form ``write a [content-type]'' where content-type is a field specified for each record and corresponds to one of the categories wiki-style articles, social media posts, forum discussions, online reviews, comments, and marketplace listings. 

{\bf Panorama+} The Panorama Plus dataset, also introduced by \cite{selvam2025panorama}, contains 9,674 structured user profiles that serve as the ground truth identities for the Panorama dataset. This dataset has the explicit PII attributes for each fictitious user, including full names, family details, socio-economic status, health attributes, and other sensitive demographics. We accessed the dataset on December 1, 2025. 

{\bf Postings} This is a dataset~\cite{arsh_koneru_2024} containing job postings from LinkedIn listed in 2023 and 2024.

{\bf Tweets} This dataset contains tweets extracted using the twitter API. We sample 100,000 records.

\subsection{Impact of Feature Rarity ($k$)}
\label{sec:app:impact-of-k}

While the main text focuses on unique features ($k=1$), we extend our analysis here to features shared by multiple users. \Cref{fig:impact-of-k} plots the estimated lower bound of the binomial parameter, $\hat{p}$, as a function of the rarity parameter $k$.

We observe that $|\cF_k|$ (the number of features appearing in the synthetic output that are shared by \textit{exactly} $k$ users in the original corpus) decreases as $k$ increases. This reduction in the effective sample size leads to looser estimates of $\hat{p}$. As the number of available matches diminishes, the statistical confidence decreases, making it increasingly hard to distinguish true disclosures from phantoms. 

To illustrate this, \Cref{fig:impact-of-k-confidence} presents the point estimate for the binomial parameter (computed as $T^{obs}/\ell k$). The shaded region indicates the confidence interval.  While all tests are significant for $k=1$,  the lower bound diverges from the point estimate as feature counts increases. Larger values of $k$ are primarily relevant for datasets where users share secrets, resulting in correlated or shared features. In contrast, the datasets studied here are synthetic and generated via independent sampling for each user.
\begin{figure}
    \centering
    \includegraphics[width=\linewidth]{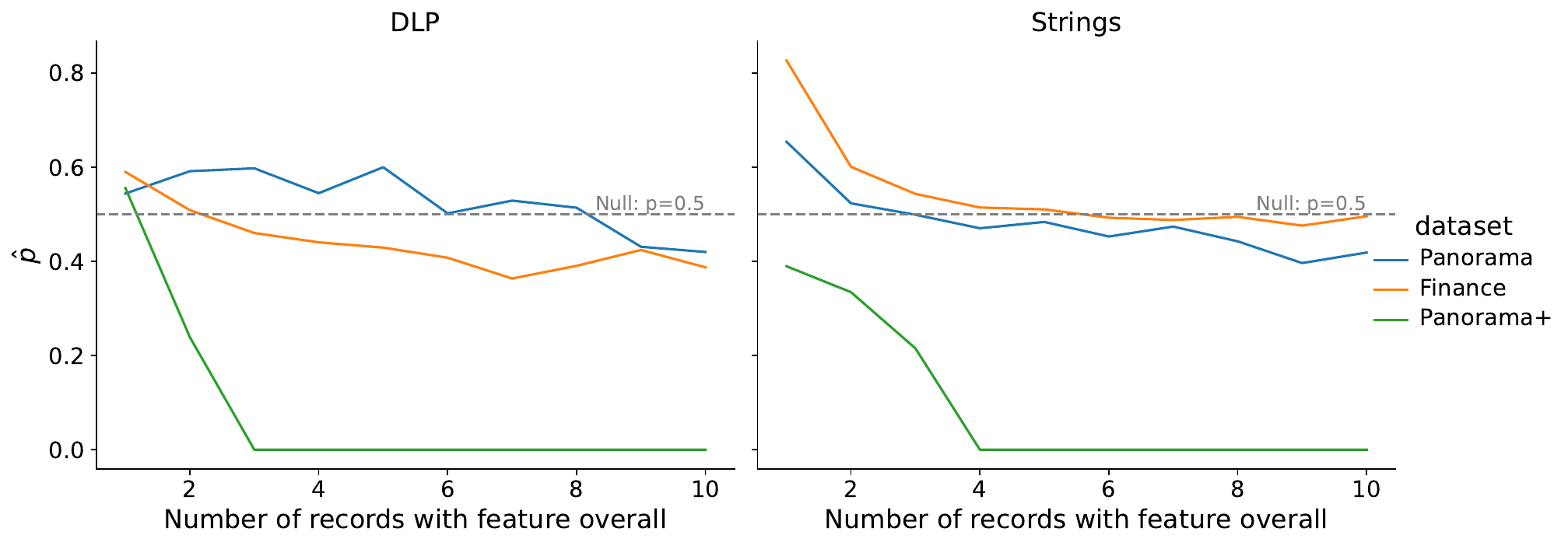}
    \caption{\textbf{Impact of feature rarity ($k$) on the audit lower bound $\hat{p}$.} As $k$ increases (features shared by more records), the number of valid audit features $|\cF_k|$ decreases. This reduction in sample size leads to looser statistical bounds, reducing the power of the
    tests.}
    \label{fig:impact-of-k-confidence}
\end{figure}

\begin{figure}
    \centering
    \includegraphics[width=0.9\linewidth]{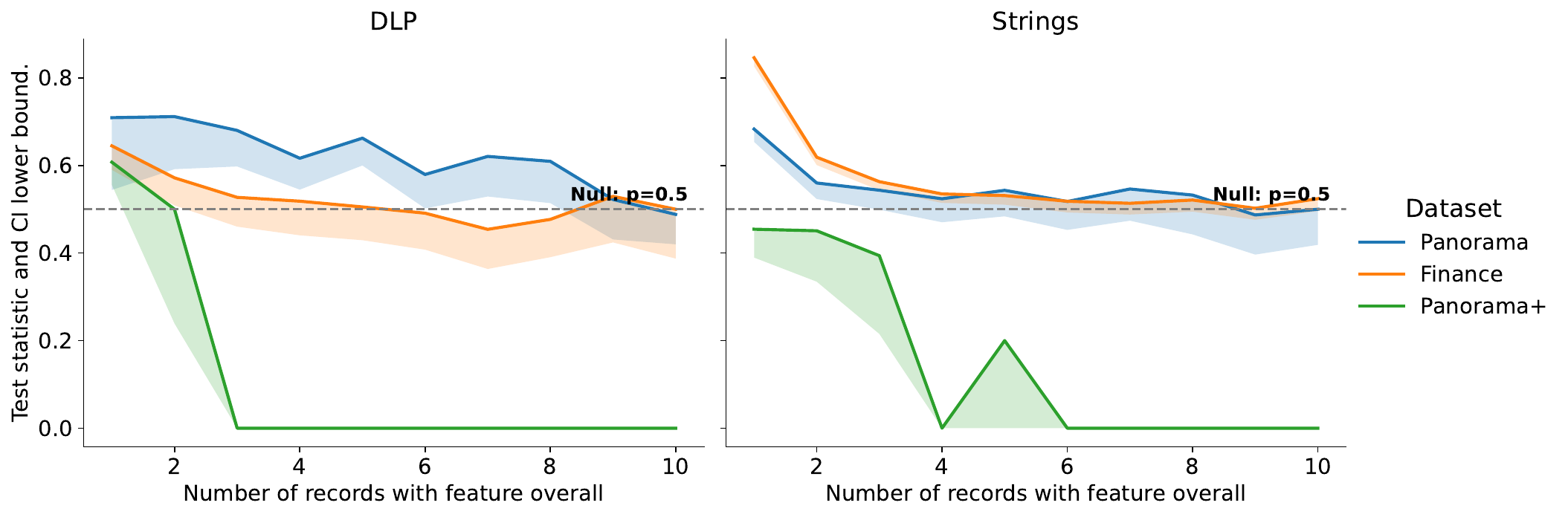}
    \caption{\textbf{Point estimates and confidence intervals for varying $k$.} The solid line represents the empirical match rate (point estimate). The shaded region illustrates the gap between the point estimate and the conservative lower bound $\hat{p}$ used for auditing. The gap highlights the increased statistical uncertainty caused by the lack of non-unique features in the synthetic corpus at higher frequencies. 
    }
    \label{fig:impact-of-k}
\end{figure}

\subsection{Impact of rarity threshold on the two-sample test}\label{sec:app:rarity-two-sample}

\begin{figure}
    \centering
    \includegraphics[width=0.45\textwidth]{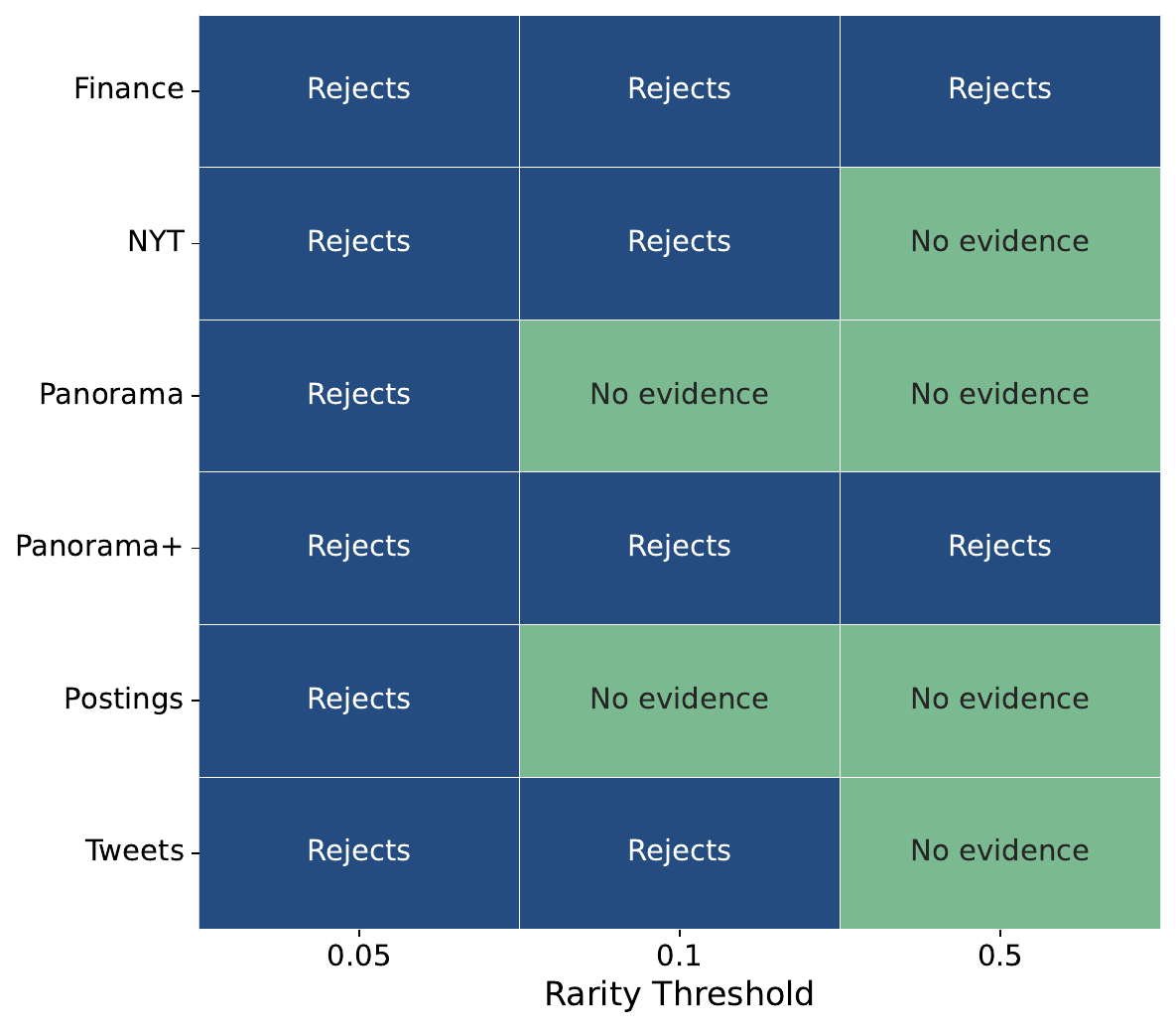}
    \caption{Two-sample test results comparing the distribution of cosine similarities between training records and synthetic rewrites against the distribution of cosine similarities between holdout records and synthetic rewrites. As we include more common records, the power of the test decreases for most datasets.}
    \label{fig:cosine_dist_comparison}
\end{figure}

In \Cref{fig:cosine_dist_comparison} we report results for the effect of the rarity threshold on the two-sample test on the rewrite data. Notice that in all datasets, the rarity threshold $0.05$ is significant. As we include more common records in the test (i.e., as the rarity threshold increases), the distributions become to distinguish. Nevertheless we observe that the test is able to identify significant leakage in all datasets for low thresholds. 

In \Cref{fig:sft_embeddings} we report results for SFT. Unlike rewrites, as we increase the rarity threshold, the test can becomes more powerful in some cases. This is consistent observations of \cite{meeus2025canary}, fine-tuned models are more likely to generate common patterns than rare unique features. Thus, by considering more rare embeddings, it is less likely to find matches in the synthetic data, as these rare patterns may not have been learned by the fine-tuned model yet or may require a larger number of synthetic samples to appear.

\begin{figure}
    \centering 
    \includegraphics[width=0.45\textwidth]{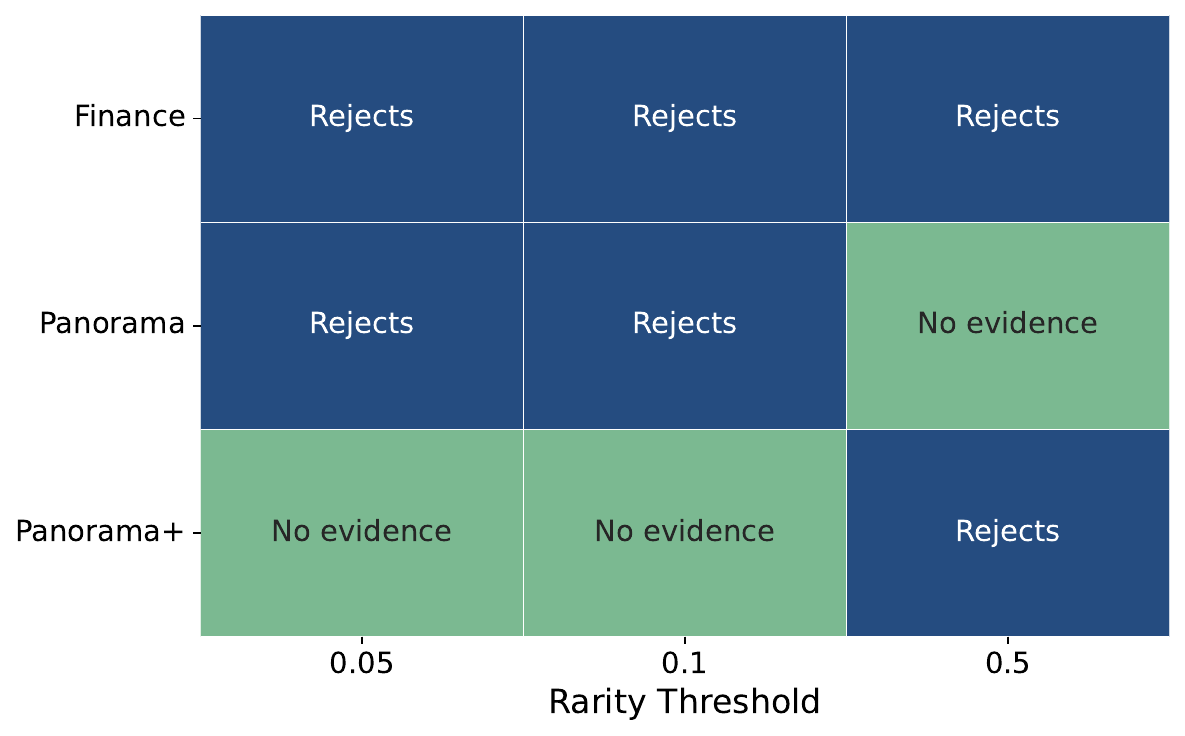}
    \caption{Two-sample test comparing distributions of training-to-synthetic and holdout-to-synthetic embedding similarities for SFT generated data}.
    \label{fig:sft_embeddings}
\end{figure}

\subsection{Example matches identified by the embedding-based method}\label{sec:app:extended-table-embedding}

We report in \Cref{tab:semantic-matches-app} more examples of matches identified by the embedding-based method for datata obtained by rewrite. Notice the ability of the method to identify leakage even in the presence of extensive rewriting.

\begin{table*}[htbp]
\small
    \centering
    \caption{Examples of semantic and structural leakage from rewrites synthetic datasets. Training records are paired with their closest synthetic match using cosine similarity. Long texts are truncated for brevity. Across all datasets, rewrites leak semantic information from training data, which is captured by cosine distance in the embedding space.}
    \label{tab:semantic-matches-app}
    \renewcommand{\arraystretch}{1.4} 
    \begin{tabularx}{\textwidth}{@{} l X X @{}}
        \toprule
        \textbf{Dataset} & \textbf{Original Train Record (Cropped)} & \textbf{Closest Synthetic Record (Cropped)} \\
        \midrule
        \textbf{NYT} &
        ... think that \match{FIFA is going to void the championships} ... owned by \match{cocaine} landlords such as the \match{Rodriguez Orejuela}... \match{America, Nacional and Millonarios} have not been punished... &
        ... \match{FIFA will ever revoke the league trophies} ... subsidiaries of the Cali and Medellín cartels... \match{Rodriguez Orejuela} used their \match{drug} billions... \match{America de Cali, Atletico Nacional, and Millonarios} have kept their stars... \\

        \addlinespace
        \textbf{Panorama} &
        \match{Richard Hunt} (born July 19, 1954...) is a dedicated \match{childcare worker} known for his lifelong commitment... \match{born to} Lance and \match{Gabrielle Hunt}... &
        \match{Richard Hunt} is a dedicated Canadian \match{childcare worker}... \match{born} on October 11, 1946... to Jeremy and \match{Laura Hunt}... \\

        \addlinespace
        \textbf{Finance} &
        09:00:01 - Job Ferran: Goedemorgen, bedankt voor de hulp. \match{Ik heb moeite met het aanmaken van mijn account.} ... &
        10:01:22 Customer: Goedemorgen/nedeem/middag, \match{ik heb moeite met het aanmaken van mijn account.} ... \\

        \addlinespace
        \textbf{Finance} &
        \match{MsgType=35} \match{SenderCompID=AcmeBroker TargetCompID=Exchange} BeginString=FIX.5.0 News=\match{Market volatility} expected... &
        ... \match{MessageType = 35} \match{SenderCompID = AcmeTrading TargetCompID = Exchange} NewsType = 35 \match{NewsText = "Acme Trading} announces significant \match{market volatility}"... \\
                \addlinespace
        \textbf{Panorama+} &  [...] \match{children\_count: 0 credit\_score: 521 disability: None drivers\_license: GB-DL-}6687495077 \match{employer: Group Store fathers\_name:} Antony \match{Simmons} finance\_status: Low first\_name: Georgia \match{gender: Female job\_title: Retail Sales Associate last\_name: Simmons locale: en\_GB marital\_status: Single mothers\_name: }Shirley \match{Simmons national\_id: GB-ID}-9261993 \match{nationality: British} net\_worth: £8342.85 [...] \match{spouse\_name: N/A }         &  \match{children\_count: 0 credit\_score: 612 disability: None drivers\_license: GB-DL-}71839502649 [...] \match{employer: Retail Group Ltd fathers\_name: }Mark \match{Simmons} finance\_status: Medium first\_name: Chloe \match{gender: Female job\_title: Sales Assistant last\_name: Simmons locale: en\_GB marital\_status: Single mothers\_name: }Sarah \match{Simmons national\_id: GB-ID}-8829401 \match{nationality: British} net\_worth: £-1150.40 \match{spouse\_name: N/A} [...] \\
         \textbf{Tweets} & \match{According to the} 2009 Cedar Point Rider Safety guide, \match{I am 30 lbs too heavy, 3 inches too tall, and chest measurement that is too large} & \match{Based on the} 2015 Alton Towers Accessibility and Health Manual, \match{my current stats put me 14 kilograms over the limit, 8 centimeters past the height ceiling, and possessed of a shoulder span that simply won't clear the harness lock.} \\
        \bottomrule
    \end{tabularx}
\end{table*}

\subsection{DP-bounded learning tests}
\label{sec:app:epsilon_estimates_feature_match}
We present the results for DP-bounded learning tests on \textsl{Non-DP} generated data. The $\hat{p}$ estimated values map to positive $\varepsilon$ estimates, confirming significant privacy violation. We present these results in \Cref{tab:epsilon_estimates_feature_match}. All $\epsilon$ estimates for \textsl{DP-SGD} generated data are not significant. This means we fail to reject the hypothesis of positive $\epsilon$  values based on the synthetic data. This confirms the positive effect of using Differential Privacy in preventing true disclosures.

\begin{table}[h]
 \caption{Feature Match statistics for a DP-bounded test, using DLP and strings extractors on synthetic data generated for \textsl{Non-DP}.  We present the estimated $\epsilon$ privacy budget lower bound, and present in bold results where the null is rejected at significance $\alpha=0.05$. All \textsl{DP-SGD} estimates are not significant.}
    \label{tab:epsilon_estimates_feature_match}
\small
    \centering
    \begin{tabular}{ l c r }
        \toprule
    \multirow{2}{*}{Dataset} & Feature  & \multirow{2}{*}{$\epsilon$} \\
         & extractor & \\
        \midrule
        \multicolumn{3}{c}{\textbf{Method: SFT}} \\
        \midrule
        Finance  & \multirow{3}{*}{DLP} & \textbf{0.364} \\
        Panorama+ &                       &  \textbf{0.223} \\
        Panorama  &                       &  \textbf{0.177} \\
        \midrule
        Finance   & \multirow{3}{*}{Strings}  & \textbf{1.561}\\
        Panorama+ &                           & 0.0 \\
        Panorama  &                           & \textbf{0.636}\\   
        \bottomrule
    \end{tabular}
\end{table}
\subsection{Feature and user match audits with canaries.}
Our framework can easily be adapted to work with canaries by simply inserting the train canaries to the train set $D_{train}$, and holdout canaries to the holdout set $D_{hold}$. 
We present these results in \Cref{tab:appendix-audit-user}. Adding canaries yields mixed results, providing a slight performance boost on Panorama (0.533 $\to$ 0.549) but showing negligible impact on Finance. In the DP settings, most scores lie near the random baseline of 0.5 or fail to produce sufficient matches (indicated by ``-''), consistent with the privacy guarantees of the model. We highlight that, while increasing MIA scores is desirable and adding canaries can help with this, it does not necessarily provide a causal explanation for natural data disclosures, which is the focus of the present work. However, exploring how to combine these methods is an interesting avenue of future work.

\label{sec:app:canary-match-tests}
\begin{table}[h]
\caption{User Match AUC scores for string features ($n \in [11, 20]$). We compare our \textsl{User-Match} test (with and without canaries) against the \textsl{Canary-Echo} baseline. Our framework outperforms the baseline on the Finance and Panorama datasets while requiring 5$\times$ fewer computational resources for training and sampling. Dashes (-) indicate cases where insufficient matches were found to calculate an AUC. Asterisks indicate significant results using the Mann–Whitney–Wilcoxon test for AUC. Bold font indicates the best method in terms of AUC.}
    \label{tab:appendix-audit-user}
    \centering
    \begin{tabular}{llccc}
    \toprule
     & & \multicolumn{2}{c}{\textbf{Ours}} & \textbf{Baseline} \\
    \cmidrule(lr){3-4} \cmidrule(lr){5-5}
    Generation & Dataset & User Match & User Match & Canary-Echo \\
    Model & & & (+ Canaries) & Meeus et al.\cite{meeus2025canary} \\
    \midrule
    \multirow{3}{*}{\textbf{Non-DP}} 
      & Finance   & ${\bf 0.63}^\star$ & ${\bf 0.63}^\star$ & 0.47 \\
      & Panorama  & $0.53^\star$ & ${\bf 0.55}^\star$ & 0.50 \\
      & Panorama+ & 0.48 & 0.47 & ${\bf 0.55}^\star$ \\
    \midrule
    \multirow{3}{*}{\textbf{DP-SGD}} 
      & Finance   & {\bf 0.52} & 0.47 & 0.49 \\
      & Panorama  & {\bf 0.50} & 0.38 & {\bf 0.50} \\
      & Panorama+ & -     & -     & ${\bf 0.56}^\star$ \\
    \bottomrule
    \end{tabular}
\end{table}

\section{P-Value Interpretation}\label{sec:pvals}

 \noindent{\bf P-value.} 
Instead of committing to a specific significance level $\alpha$ in advance, the auditor may report a p-value, which summarizes how strongly the data speak against the zero‑learning hypothesis. A concrete type I error guarantee is obtained once a decision threshold $\alpha$ is fixed a priori and the rule ``reject if $\pval\le \alpha$'' is adopted.
 
\begin{lemma}\label{lem:pval-zero}
Define
\begin{align*}
    \pval:=\begin{cases}
    \exp\Big(-\frac{2(T- \ell kp)^2}{\sum_{i=1}^n d_i^2}\Big),  & \text{ if }T\ge \ell kp,\\
    1 & \text{otherwise}
    \end{cases}
\end{align*}
Then, $\pval$ is a valid $p$-value for the zero learning hypothesis, i.e, $\Prob(\pval\le t)\le t$ for any $t\in[0,1]$.
\end{lemma}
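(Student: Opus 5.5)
The plan is to reduce the claim to the Hoeffding bound \eqref{eq:Hoeffding1} that already underlies the zero-learning test. We condition on the observed matrix $A$, and hence on the vector $d=(d_1,\dots,d_n)$. Under the zero-learning hypothesis, \eqref{eq:null-dist-zero} states that, conditionally on $A$, $T$ has the law of $\sum_i d_i s_i$ with $s_i$ i.i.d.\ $\mathrm{Bernoulli}(p)$. Its conditional mean is $\ell k p$, and the summands $d_i s_i$ lie in $[0,d_i]$. It suffices to prove $\Prob(\pval\le t\mid A)\le t$ for every realization of $A$ and then average over $A$.

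Fix $t\in[0,1]$. First dispose of the trivial cases. If $t=1$, the inequality holds automatically. If $t<1$, then by definition $\pval\le t$ forces $T\ge \ell k p$, since otherwise $\pval=1>t$. On the event $T\ge \ell kp$, the map $T\mapsto \exp(-2(T-\ell kp)^2/\|d\|_2^2)$ is nonincreasing in $T$. We may assume $\|d\|_2>0$: if all $d_i=0$ then $\ell=0$, so $T=\ell kp=0$ and $\pval=1$, and the claim is trivial for $t<1$. For $t\in(0,1)$, the event $\{\pval\le t\}$ therefore equals $\{T-\ell kp\ge u_t\}$ with $u_t:=\sqrt{\tfrac12\|d\|_2^2\log(1/t)}>0$. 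For $t=0$, the event $\pval\le 0$ is empty, because the exponential is strictly positive.

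Now apply \eqref{eq:Hoeffding1} with the deterministic (given $A$) threshold $u_t$:
\[
\Prob(\pval\le t\mid A)=\Prob(T-\ell kp\ge u_t\mid A)\le \exp\Big(-\frac{2u_t^2}{\|d\|_2^2}\Big)=t .
\]
Taking expectation over $A$ gives $\Prob(\pval\le t)\le t$.

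The main point needing care is the conditioning step. We must justify that, under zero learning, the $s_i$ remain i.i.d.\ $\mathrm{Bernoulli}(p)$ given $A$; this is exactly the content of \eqref{eq:null-dist-zero}, which we take as given. We must also check that the threshold $u_t$ depends only on $A$ and not on $s$, which holds because $d$ is a function of $A$ alone. Everything else is the monotone inversion above.
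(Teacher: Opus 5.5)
Your proposal is correct and follows essentially the same route as the paper's proof: both note that $\{p_{\rm val}\le t\}$ forces $T\ge \ell kp$, invert the monotone exponential to get the event $\{T-\ell kp\ge \sqrt{\tfrac12\|d\|_2^2\log(1/t)}\}$, and apply the Hoeffding bound \eqref{eq:Hoeffding1} under the conditional null law $\sum_i d_i s_i$. Your version is slightly more careful than the paper's. The paper claims $\{p_{\rm val}\le t\}$ has probability zero when $T<\ell kp$, which is false at $t=1$ but harmless, and it omits $t=0$ and $d=0$, where the defining formula is $0/0$; your $p_{\rm val}=1$ there is a convention.
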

\begin{proof}[Proof of Lemma~\ref{lem:pval-zero}] Recalling~\eqref{eq:Hoeffding1} we have the following for any $t>0$,
\begin{align}\label{eq:hoeff-repeat}
\Prob(T-\ell kp \ge t)\le \exp\Big(-\frac{2t^2}{\sum_{i=1}^n d_i^2}\Big)
\end{align}
We consider two cases.

\noindent$\bullet$ {\bf Case 1:} $T< \ell kp$. Then 
$\pval=1 \ge t$ for any 
$t\in[0,1]$, so the event 
$\{\pval\le t\}$ has probability zero.

\noindent$\bullet$ {\bf Case 2:} $T\ge \ell kp$. The event $\{\pval\le t\}$ is equivalent to
$\exp\Big(-\frac{2(T- \ell kp)^2}{\sum_{i=1}^n d_i^2}\Big)\le t \Leftrightarrow-\frac{2(T- \ell kp)^2}{\sum_{i=1}^n d_i^2} \le \log t \Leftrightarrow T- \ell kp \ge \sqrt{\frac{1}{2} (\sum_{i=1}^n d_i^2)\log(1/t)}$
where we used that $T \ge \ell kp$ and $\log t<0$ for $t\in[0,1)$.
Using~\eqref{eq:hoeff-repeat} with this threshold, the last term has probability at most $t$.

Since $\{\pval\le t\}\subseteq \{T\ge \ell kp\}$, we obtain $\Prob(\pval\le t)\le t$.

The two cases together show that 
$\pval$ is a valid p‑value.
\end{proof}

Likewise, we can construct the p-value for testing the DP-bounded learning hypothesis. Let $q_\epsilon:= \frac{p}{p+(1-p)e^{-\epsilon}}$ and define 
\begin{align*}
    \pval:=\begin{cases}
    \exp\Big(-\frac{2(T^{\rm obs}- \ell kq_\epsilon)^2}{\sum_{i=1}^n d_i^2}\Big),  & \text{ if }T^{\rm obs}\ge \ell kq_\epsilon,\\
    1 & \text{otherwise}
    \end{cases}
\end{align*}
Then, $\pval$ is a valid $p$-value for the DP-bounded learning hypothesis, i.e, $\Prob(\pval\le t)\le t$ for any $t\in[0,1]$. The proof of validity follows along the same lines as in the proof of Lemma~\ref{lem:pval-zero} and is omitted.

\section{Proof of theorems and technical lemmas}\label{app:proofs}

\subsection{Proof of Lemma~\ref{lem:CI-zero}}
As discussed in~\eqref{eq:null-dist-zero}, under the zero‑learning hypothesis, the test statistic $T$ has the same distribution as $ \sum_{i=1}^n d_i s_i$, with $s_i\in\{0,1\}$ independent Bernoulli random variables with $\Prob(s_i = 1) = p$. Hence, by applying Hoeffding bound with the threshold $t_\alpha$, we obtain
\begin{align}
\Prob(T-\ell k p> t_\alpha) \le \alpha\,,
\end{align}
or equivalently
\begin{align}
\Prob(T-\ell k p\le t_\alpha) \ge 1-\alpha\,,
\end{align}
The event $\{T-\ell kp\le t_\alpha\}$ is equivalent to the event $\{(T-t_\alpha)/(\ell k)\le p\}$. Hence, with probability at least $1-\alpha$, we have
\[
p\in \Big[\frac{T- t_\alpha}{\ell k}, 1\Big]\,,
\]
which completes the proof.
\subsection{Proof of Theorem~\ref{thm:DP-valid}}
For each record $i\in [n]$, let $s_i\in\{0,1\}$ be the indicator that record $i$ belongs to the training set $D_{\text{train}}$ (denoted by $s_i =1$) or the hold-out set $D_{\text{hold}}$ (denoted by $s_i=0$). For each feature $j\in\cF$, let $o_j\in\{0,1\}$ be the indicator that feature 
$j$ appears in the synthetic output. We write $ o= \cM(s)$ with $o\in\{0,1\}^{|\cF|}$ and $s\in\{0,1\}^n$. 

We first establish upper and lower tail bounds for the distribution of the test statistic under the DP-bounded learning hypothesis. Our proof strategy follows the lines of Proposition 5.1 in \cite{steinke2023privacy}, but generalizes their analysis from the special case $p=1/2$ to arbitrary sampling probabilities $p\in(0,1)$. More importantly, our test statistic—based on feature-unit incidence patterns—requires a  different analysis.

\begin{proposition}\label{propo:feature-k-DP}
Suppose that the mechanism $\cM$ is $\epsilon$-DP. Conditional on the synthetic features in the output $(o = o^*)$, the test statistic from~\eqref{eq:test-statistic} is  stochastically dominated by 
$\sum_{i\in[n]} d_i z_i$, 
where $\{z_i\}_{i=1}^n$ are i.i.d. Bernoulli$(q_\epsilon)$ random variables.
In addition, it stochastically dominates $\sum_{i\in[n]} d_i z'_i$, where $\{z'_i\}_{i=1}^n$ are i.i.d. Bernoulli$(q'_\epsilon)$ random variables, with
\begin{align*}
 q'_\epsilon:= \frac{p}{p+(1-p)e^\epsilon}, \quad q_\epsilon :=  \frac{p}{p+(1-p)e^{-\epsilon}}
\end{align*}
Formally, for any value $v\in \mathbb{R}$, 
$\Prob(\sum_{i=1}^n d_i z'_i \ge v|o=o^*) \le \Prob(T\ge v| o= o^*) \nonumber
\le \Prob(\sum_{i=1}^n d_i z_i \ge v|o=o^*)$
\end{proposition}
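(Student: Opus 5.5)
Following Proposition 5.1 of \cite{steinke2023privacy}, we will show that the conditional law of $s$ given $o=o^*$ is close to the product prior, one coordinate at a time. Two structural facts make this work. First, once $o^*$ is fixed, the matrix $A$ and hence the weights $d_i\ge 0$ are deterministic. This is because $A$ records which of the records have the rare features appearing in $o^*$, and $\cF_k$ is defined relative to the full corpus $D$, not $D_{\text{train}}$. Second, $T=\sum_i d_i s_i$ is then a nondecreasing function of $s\in\{0,1\}^n$.

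\textbf{Step 1: one-coordinate posterior bound.} Fix $i$ and any realization $s_{-i}$ of the other coordinates. By Bayes' rule,
\[
\Prob(s_i=1\mid o^*,s_{-i})=\frac{p\,\Prob(o^*\mid s_i=1,s_{-i})}{p\,\Prob(o^*\mid s_i=1,s_{-i})+(1-p)\,\Prob(o^*\mid s_i=0,s_{-i})}.
\]
The inputs $(1,s_{-i})$ and $(0,s_{-i})$ give neighboring training sets (they differ by adding or removing record $i$). So $\epsilon$-DP gives that the likelihood ratio lies in $[e^{-\epsilon},e^{\epsilon}]$. The map $r\mapsto pr/(pr+1-p)$ is increasing, so the conditional probability lies in $[q'_\epsilon,q_\epsilon]$ for every $s_{-i}$. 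If $\Prob(o^*)=0$ the statement is vacuous, and otherwise all the conditionals are well defined.

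\textbf{Step 2: sequential coupling.} Order the records $1,\dots,n$. By Step 1, conditional on $o^*$ and $s_1,\dots,s_{i-1}$, the probability that $s_i=1$ lies in $[q'_\epsilon,q_\epsilon]$. This follows by averaging the bound of Step 1 over $s_{i+1},\dots,s_n$ under their conditional law. Now draw i.i.d.\ uniforms $U_i$ and set:
\begin{itemize}
\item $s_i=\mathds{1}\{U_i\le \pi_i(s_{<i})\}$, where $\pi_i$ is this conditional probability;
\item $z_i=\mathds{1}\{U_i\le q_\epsilon\}$;
\item $z'_i=\mathds{1}\{U_i\le q'_\epsilon\}$.
\end{itemize}
This construction reproduces the correct conditional law of $s$, and the $z_i$ and $z'_i$ are i.i.d.\ Bernoulli with the stated parameters. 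Pointwise we have $z'_i\le s_i\le z_i$. Since $d_i\ge0$, this gives $\sum d_i z'_i\le T\le\sum d_i z_i$ almost surely, and both tail inequalities follow for every $v$.

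\textbf{Main obstacle.} The delicate point is justifying that the $d_i$ are fixed given $o^*$. We need $A$, and with it $\cF_k$, to depend only on the output and the full dataset $D$, and not on the split $s$. We also need the DP guarantee to apply to the map $s\mapsto o$ with $D$ held fixed. I would state this explicitly, and note that post-processing preserves $\epsilon$-DP when passing from $Y$ to $o$. After that, the averaging in Step 2 is routine, because a convex combination of values in $[q'_\epsilon,q_\epsilon]$ stays in that interval.
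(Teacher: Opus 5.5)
Your proof is correct. Its first half coincides with the paper's: both use Bayes' rule and the $\epsilon$-DP likelihood ratio to place the posterior probability of $s_i=1$ in $[q'_\epsilon,q_\epsilon]$. The paper conditions directly on $s_{<i}$. There the likelihood $\Prob(\cM(s)=o^*\mid s_i,s_{<i})$ is a prior-weighted mixture over $s_{>i}$, which is independent of $s_i$, so the DP ratio survives. You instead condition on all of $s_{-i}$ and then average the posterior over $s_{>i}$. The two are equivalent.

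You genuinely differ in the aggregation step. The paper inducts on $n$. It invokes Lemma~9 of \cite{steinke2023privacy} (Lemma~\ref{lem:dominance}) for sums of dominated variables. It needs the auxiliary Lemma~\ref{lemma:u-U} to pass from conditioning on $(o,s_{<n})$ to conditioning on $(o,T_{n-1})$. It then asserts that the lower bound follows by a ``nearly identical argument.'' Your shared-uniform coupling replaces all of this with one explicit construction. It gives the almost-sure sandwich $\sum_i d_i z'_i\le T\le\sum_i d_i z_i$, so both tail inequalities come out at once. It is also slightly stronger: you get an actual coupling, not just tail comparisons. The paper's route has the advantage of reusing an off-the-shelf lemma from the auditing literature; yours is self-contained and more elementary.

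One small point to tighten is your claim that all conditionals are well defined once $\Prob(o^*)>0$. It is true but deserves a sentence. Under pure $\epsilon$-DP, $\Prob(o^*\mid s)\ge e^{-n\epsilon}\,\Prob(o^*\mid s')$ for any two split vectors $s,s'$, since they are connected by at most $n$ single-coordinate changes. So positivity for one $s$ gives positivity for all, and $p\in(0,1)$ gives every $s$ positive prior mass.
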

Using Proposition~\ref{propo:feature-k-DP}, we have
$
\Prob(T\ge c_{\epsilon,\alpha}|o=o^*) \le 
\Prob(\sum_{i=1}^n d_i z_i \ge c_{\epsilon,\alpha}|o=o^*) = \Prob\Big(\sum_{i=1}^n d_i z_i - q_\epsilon \sum_{i=1}^n d_i \ge \|d\|_2\sqrt{\frac{1}{2} \log(1/\alpha)}\;\Big|\;o=o^*\Big) \le \alpha\,,
$
where the last step follows from the Hoeffding bound applied to the sum  $\sum_{i=1}^n d_i z_i$.
Note that conditional on $o=o^*$, the quantities $d_i$ become deterministic. The result then follows by marginalizing over the output $o$.

\subsubsection{Proof of Proposition~\ref{propo:feature-k-DP}}
We use asterisks $s^*,o^*$ to indicate a realization of vectors $s = (s_1,\dotsc, s_n)$ and $o =(o_1,\dotsc,...,o_{|\cF|})$. 

Fix $i\in[n]$, and $s^*_{<i} \in\{0,1\}^{i-1}$. By
Bayes’ law 
\begin{align}
&\Prob(s_i=1|\cM(s)= o^*, s_{<i} = s^*_{<i})\nonumber\\
&= 
\frac{\Prob(M(s) = o^*|s_i = 1, s_{<i}=s^*_{<i})\; \Prob(s_i=1|s_{<i} = s^*_{<i})}{\Prob(\cM(s) = o^*| s_{<i} = s^*_{<i})}\label{eq:bayes}
\end{align}
We have $\Prob(s_i=1|s_{<i} = s^*_{<i}) = \Prob(s_i=1) = p$ as the records are included in the training set independently. We also have
\begin{align}
&\Prob(\cM(s) = o^*| s_{<i} = s^*_{<i})\nonumber\\
&= \Prob(\cM(s) = o^*| s_i = 1,s_{<i} = s^*_{<i}) \;\Prob(s_i=1)\nonumber\\
&\;\;\;+ \Prob(\cM(s) = o^*| s_i = 0,s_{<i} = s^*_{<i})\; \Prob(s_i=0)\nonumber\\
&= \Prob(\cM(s) = o^*| s_i = 1,s_{<i} = s^*_{<i}) p\nonumber\\
&\;\;\;+ \Prob(\cM(s) = o^*| s_i = 0,s_{<i} = s^*_{<i})(1-p)\label{eq:DP-ratio}
\end{align}
By definition of $\epsilon$-DP we have
\begin{align}
\frac{\Prob(\cM(s) = o^*| s_i = 1,s_{<i} = s^*_{<i})}{\Prob(\cM(s) = o^*| s_i = 0,s_{<i} = s^*_{<i})} \in [e^{-\epsilon}, e^{\epsilon}]\,,
\end{align}
which along with~\eqref{eq:DP-ratio} implies that
\begin{align}
&\frac{\Prob(\cM(s) = o^*| s_{<i} = s^*_{<i})}{\Prob(\cM(s) = o^*| s_i = 1,s_{<i} = s^*_{<i})}\nonumber \\
&\in \left[p+(1-p) e^{-\epsilon}, p+(1-p) e^{\epsilon} \right]
\end{align}
Using this bound in~\eqref{eq:bayes} we obtain
\begin{align}
&\Prob(\cM(s) = o| s_{<i} = s^*_{<i})
\in \left[\frac{1}{1+\frac{1-p}{p}e^\epsilon}, \frac{1}{1+\frac{1-p}{p}e^{-\epsilon}} \right]\,.
\end{align}

 We proceed by proving the claim via induction on the number of records $n$. Note that the test statistics can be written as $\sum_{i\in[n], j\in[|\cF|]} s_i \delta_{ij} o_j$ where $\delta_{ij} = 1$ if feature $j$ appears in record $i$ and $\delta_{ij}=0$ otherwise.
We have
%
\begin{align*}
&\sum_{i\in[n],j\in[|\mathcal{F}|]} s_i \delta_{ij} o_j \\
&= \sum_{i\in[n-1],j\in[|\mathcal{F}|]} s_i \delta_{ij} o_j + s_n \Big( \sum_{j\in[|\mathcal{F}|]} \delta_{nj} o_j \Big) \\
&= \sum_{i\in[n-1],j\in[|\mathcal{F}|]} s_i \delta_{ij} o_j + d_n s_n,
\end{align*}
where the last equality holds since $d_n$ is defined as the number of features in the output that appear in record $n$.

By the induction hypothesis, conditional on $o = o^*$, the partial test statistic $T_{n-1}:=\sum_{i\in[n-1],j\in[|\cF|]} s_i \delta_{ij} o_j$ is stochastically dominated by $\sum_{i\in[n-1]} d_i z_i$. 
 Moreover, as shown above, conditioning on $(o, s_1, \dotsc, s_{n-1})$, the variable $s_n$ is stochastically dominated by $z_n\sim\text{Bernoulli}(q_\epsilon)$. Since $d_n$ is measurable with respect to $o$,  it follows that  $d_n s_n$ is dominated by $d_n z_n$. Lemma~\ref{lemma:u-U} below then implies that, conditional on $(o,T_{n-1})$, $d_n s_n$ is stochastically dominated by $d_n z_n$.  

\begin{lemma}\label{lemma:u-U}
If $X|U$ is stochastically dominated by $Y|U$, then for any deterministic function $f$, $X|f(U)$ is stochastically dominated by $Y|f(U)$.
\end{lemma}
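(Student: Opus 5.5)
The plan is to read ``$X\mid U$ is stochastically dominated by $Y\mid U$'' in the sense used in Proposition~\ref{propo:feature-k-DP}: for every $v\in\mathbb{R}$,
\[
\Prob(X\ge v\mid U)\le \Prob(Y\ge v\mid U)
\]
almost surely, or equivalently pointwise for every value $u$ of $U$ with $\Prob(U=u)>0$. In the application everything lives on a finite space, since $s\in\{0,1\}^n$ and $o\in\{0,1\}^{|\cF|}$ are finite. So I would first give the argument with elementary conditional probabilities over finitely many atoms, and then note that the same argument goes through for general $U$ via conditional expectations.

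The key step is the tower property. Let $W=f(U)$. Since $f$ is deterministic, $\sigma(W)\subseteq\sigma(U)$, so
\[
\Prob(X\ge v\mid W)=\E\big[\Prob(X\ge v\mid U)\,\big|\,W\big].
\]
Concretely, for each value $w$ with $\Prob(W=w)>0$, write
\[
\Prob(X\ge v\mid W=w)=\sum_{u:\,f(u)=w}\Prob(X\ge v\mid U=u)\,\frac{\Prob(U=u)}{\Prob(W=w)}.
\]
The same identity holds with $Y$ in place of $X$, and the weights $\Prob(U=u)/\Prob(W=w)$ are identical in the two expressions. Applying the hypothesis termwise, together with nonnegativity of the weights (monotonicity of conditional expectation in the general case), gives
\[
\Prob(X\ge v\mid W=w)\le \Prob(Y\ge v\mid W=w)
\]
for every $v$ and every $w$, which is the claim.

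The main obstacle is not the inequality itself but making sure the coupling is consistent: $X$ and $Y$ must be defined on a common probability space together with $U$, so that the same mixing weights over $U$ appear on both sides. In the induction for Proposition~\ref{propo:feature-k-DP}, the roles are $U=(o,s_1,\dots,s_{n-1})$, $W=(o,T_{n-1})$, $X=d_ns_n$ and $Y=d_nz_n$. Here $T_{n-1}$ is a function of $(o,s_{<n})$, and $z_n$ can be taken independent of everything with the same law under any conditioning. I would state this explicitly to justify the substitution $f(U)=(o,T_{n-1})$. Finally, I would remark that conditioning events of probability zero can be ignored, since the domination need only hold almost surely.
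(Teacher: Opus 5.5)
Your proof is correct and is essentially the paper's argument. The paper also writes $\Prob(X\ge v\mid f(U)=b)$ as a mixture over the fiber $\Omega_b=f^{-1}(b)$ with weights common to $X$ and $Y$, and applies the pointwise domination termwise; it only routes the mixture through Bayes' rule, using $\Prob(U=u\mid X\ge v)\Prob(X\ge v)=\Prob(X\ge v\mid U=u)\Prob(U=u)$, where you use the law of total probability directly. Your tower-property formulation, together with your point that $X$, $Y$, $U$ must share one probability space, is if anything the cleaner way to make the paper's informal integral over $\Omega_b$ rigorous for non-discrete $U$.
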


To complete the induction step, we apply \cite[Lemma 9]{steinke2023privacy}, stated here for convenience:
\begin{lemma}\label{lem:dominance}
Suppose $X_1$ is stochastically dominated by $Y_1$. Suppose that, for all $x\in\mathbb{R}$, the
conditional distribution $X_2|X_1 = x$ is stochastically dominated by $Y_2$. Assume that $Y_1$ and
$Y_2$ are independent. Then $X_1 + X_2$ is stochastically dominated by $Y_1 + Y_2$.
\end{lemma}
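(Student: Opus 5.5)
The proof conditions on $X_1$, replaces the conditional law of $X_2$ by the independent $Y_2$, and then uses that the resulting function of $X_1$ is monotone, so the dominance of $X_1$ by $Y_1$ can be applied. Throughout, ``$X$ is stochastically dominated by $Y$'' means $\Prob(X\ge v)\le\Prob(Y\ge v)$ for every $v\in\mathbb{R}$, the same convention as in Proposition~\ref{propo:feature-k-DP}. I will work with a regular conditional distribution of $X_2$ given $X_1$; this is automatic here since all variables are real-valued.

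\textbf{Step 1 (condition on $X_1$).} Fix $v\in\mathbb{R}$. By the tower property,
\[
\Prob(X_1+X_2\ge v)=\E\big[\Prob(X_2\ge v-X_1\mid X_1)\big].
\]
For each value $x$ of $X_1$, the hypothesis that $X_2\mid X_1=x$ is dominated by $Y_2$ gives
\[
\Prob(X_2\ge v-x\mid X_1=x)\le \Prob(Y_2\ge v-x)=:G_v(x).
\]
Hence $\Prob(X_1+X_2\ge v)\le \E[G_v(X_1)]$.

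\textbf{Step 2 (monotonicity).} The function $G_v(x)=\Prob(Y_2\ge v-x)$ is nondecreasing in $x$ and takes values in $[0,1]$. It therefore suffices to show the following claim.

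\emph{Claim.} If $X_1$ is dominated by $Y_1$ and $g$ is a bounded nondecreasing function, then $\E[g(X_1)]\le\E[g(Y_1)]$.

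I expect this claim to be the main (if mild) technical point, because dominance is stated only for tail events of the form $\{X\ge a\}$. I would prove it with the layer-cake formula: for $g$ with values in $[0,1]$,
\[
\E[g(X_1)]=\int_0^1 \Prob\big(g(X_1)>t\big)\,\de t .
\]
Because $g$ is monotone, each superlevel set $\{x:g(x)>t\}$ is an up-set of $\mathbb{R}$, i.e.\ of the form $[a,\infty)$ or $(a,\infty)$.
\begin{itemize}
\item For $[a,\infty)$, the inequality $\Prob(X_1\ge a)\le\Prob(Y_1\ge a)$ is exactly the hypothesis.
\item For $(a,\infty)$, write $\Prob(X_1>a)=\lim_{m\to\infty}\Prob(X_1\ge a+1/m)$, compare term by term with $Y_1$, and pass to the limit.
\end{itemize}
Integrating over $t$ then gives the claim. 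An equivalent alternative is quantile coupling: $X_1=F_{X_1}^{-1}(U)\le F_{Y_1}^{-1}(U)=Y_1$ pointwise for a single uniform $U$.

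\textbf{Step 3 (use independence).} By the claim, $\E[G_v(X_1)]\le \E[G_v(Y_1)]$. Since $Y_1$ and $Y_2$ are independent, Fubini gives
\[
\E[G_v(Y_1)]=\E\big[\Prob(Y_2\ge v-Y_1\mid Y_1)\big]=\Prob(Y_1+Y_2\ge v).
\]
Chaining Steps 1–3 yields $\Prob(X_1+X_2\ge v)\le\Prob(Y_1+Y_2\ge v)$ for every $v$, which is the asserted dominance.

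Independence of $Y_1$ and $Y_2$ is used only in this last identity. No joint relationship between $(X_1,X_2)$ and $(Y_1,Y_2)$ is needed, which is what makes the lemma usable in the induction step of Proposition~\ref{propo:feature-k-DP}.
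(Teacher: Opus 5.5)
Your proof is correct. Each step holds: conditioning on $X_1$, bounding by the nondecreasing function $G_v(x)=\Prob(Y_2\ge v-x)$, transferring from $X_1$ to $Y_1$ via your layer-cake and limit argument (which handles the $\ge$ versus $>$ issue, with the trivial up-sets $\emptyset$ and $\mathbb{R}$ also fine), and then using independence of $Y_1,Y_2$. The paper gives no proof of its own; it quotes the statement as Lemma 9 of \cite{steinke2023privacy}, so your argument supplies what the paper omits, and it is the standard proof of this fact, essentially the route taken in that source.
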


Using lemma~\ref{lem:dominance} for
\begin{align*}
X_1 &= T_{n-1} (\text{conditional on }o=o^*) \\
Y_1 &= \sum_{i\in[n-1]} d_i z_i (\text{conditional on }o=o^*) \\
X_2 &= d_n s_n (\text{conditional on }o=o^*)\\ 
Y_2 &= d_n z_n (\text{conditional on }o=o^*)    
\end{align*}
 Lemma~\ref{lem:dominance} yields the desired result. A nearly identical argument applies to obtain the claimed lower tail bound using $z'_i$.
 \smallskip
 
\noindent{\bf Proof of Lemma~\ref{lemma:u-U}}
Let $\Omega_b = f^{-1}(b)$. Then $\Prob(X \ge v \mid f(U) = b) = \Prob(X \ge v \mid U \in \Omega_b) = \frac{\Prob(U \in \Omega_b \mid X \ge v) \Prob(X \ge v)}{\Prob(U \in \Omega_b)} = \frac{\int_{\Omega_b} \Prob(U = u \mid X \ge v) \, \de u}{\Prob(U \in \Omega_b)} \Prob(X \ge v)$.
Since $X|U=u$ is dominated by $Y|U=u$, we have $$\Prob(X\ge v|U=u)\le \Prob(Y\ge v|U=u),$$  and thus 
\[
\Prob(U=u|X\ge v) \Prob(X\ge v) \le \Prob(U=u|Y\ge v) \Prob(Y\ge v).
\]
Substituting this bound yields
\begin{align*}
&\Prob(X \ge v \mid f(U) = b)\\
&\le \frac{\int_{\Omega_b} \Prob(U = u \mid Y \ge v) \, \de u}{\Prob(U \in \Omega_b)} \Prob(Y \ge v) \\
&= \Prob(Y \ge v \mid f(U) = b),
\end{align*}
which completes the proof. $\square$

\subsection{Proof of Theorem~\ref{thm:eps-LB}}
By Proposition~\ref{propo:feature-k-DP}, under the DP-bounded learning hypothesis and conditional on the observed output, the test statistic $T$  is stochastically dominated by $\sum_{i=1}^n d_i z_i$ where $z_i\sim\text{Ber}(q_\epsilon)$, independently. Hoeffding's inequality then yields
\[
\Prob\left(\sum_{i=1}^n d_i z_i - q_\epsilon\|d\|_1 \ge t \right)\le \exp\left(-\frac{2t^2}{\|d\|_2^2}\right).
\]
Setting $t = \|d\|_2\sqrt{(1/2)\log(1/\alpha)}$ gives
\[
\Prob\left(\sum_{i=1}^n d_i z_i \ge q_\epsilon\|d\|_1 + \|d\|_2\sqrt{(1/2)\log(1/\alpha)} \right)\le \alpha\,.
\]
Thus, with probability at least $1-\alpha$,
\[
T\le q_\epsilon \|d\|_1 +\|d\|_2\sqrt{(1/2)\log(1/\alpha)}
\]
Rearranging and substituting $q_\epsilon = \frac{p}{p+(1-p)e^{-\epsilon}}$ yields
\[
A:= \frac{T}{\|d\|_1} - \frac{\|d\|_2}{\|d\|_1}\sqrt{\frac{1}{2}\log(1/\alpha)} \le \frac{p}{p+(1-p)e^{-\epsilon}}\,.
\]
Solving for $\epsilon$ gives the lower bound $\epsilon\ge \epsilon_*'$.

Likewise, $T$ stochastically dominates $\sum_{i=1}^n d_i z_i$, where $z_i\sim\text{Ber}(q_\epsilon)$, independently. The one-sided Hoeffding bound gives
\[
\Prob\left(\sum_{i=1}^n d_i z'_i \le q'_\epsilon\|d\|_1 - \|d\|_2\sqrt{(1/2)\log(1/\alpha)} \right)\le \alpha\,,
\]
or equivalently,
\[
\Prob\left(\sum_{i=1}^n d_i z'_i \ge q'_\epsilon\|d\|_1 - \|d\|_2\sqrt{(1/2)\log(1/\alpha)} \right)\ge 1-\alpha\,.
\]
Thus, with probability at least $1-\alpha$, 
\[
T\ge q'_\epsilon\|d\|_1 - \|d\|_2\sqrt{(1/2)\log(1/\alpha)}.
\]
Rearranging the terms yields
\[
B:=\frac{T}{\|d\|_1} + \frac{\|d\|_2}{\|d\|_1}\sqrt{\frac{1}{2}\log(1/\alpha)} \ge q'_\epsilon = \frac{p}{p+(1-p)e^{\epsilon}}\,.
\]
Solving for $\epsilon$ gives the lower bound $\epsilon\ge \epsilon_*''$.

Combining these bounds, under the $\epsilon$-DP hypothesis we have $\epsilon\ge \epsilon_* = \max(\epsilon_*',\epsilon_*'')$, with probability at least $1-\alpha$.
\subsection{Proof of Theorem~\ref{thm:mia_main}}We recall that the vector $s\in\{0,1\}^n$ encodes which records are in the training set or the hold-out set. Let $u\in\{-1,0,1\}^n$ with $u_i:=\Phi(g(x_i,Y))$ denoting the adversary's guess of $s_i$. Under the null hypothesis, $\cM$ is $\epsilon$-DP and so by  the postprocessing property of DP, $u$ is  an $\epsilon$-DP function of $s$.

By definition, $w_{\text{train}} = \sum_{i=1}^n \max\{0,u(s_i) s_i\}$. By a similar argument as in Proposition 5.1 in \cite{steinke2023privacy}, we have 
\begin{align*}
        &\Prob\left( \sum_{i=1}^{n}  \max\{0,u(s_i)\cdot s_i\} \geq v\;\Big|\; u(s)=u \right) \\
        &\leq \Prob\left(\sum_{i=1}^{n} \tilde{S}_i |u_i| \geq v \right) \,,
    \end{align*}
where $\tilde{S}_i \sim \text{Bern}\left( \frac{p}{p+(1-p)e^{-\varepsilon}}\right)$ are independent random variables. 

Since $\Phi$ can ``abstain'' to make predictions for certain scores, let $r(u)$ be the number of predictions given $u=u(s)$. We can average over the zeroed bernoullis to get that 
\begin{align}
\Prob\left(\sum_{i=1}^{n} \tilde{S}_i |u_i| \geq v \right) = \Prob(X\ge v)
\end{align}
where $X\sim \text{Binomial}\left(r,\frac{p}{p+(1-p)e^{-\epsilon}}\right)$.

By combining the previous equations, we get
\[
\Prob(w_{\text{train}}\ge v\;|\; u(s) = u)\le \Prob(X\ge v)\,.
\]
By setting $v=\tilde{c}_{\epsilon, \alpha}$, the $(1-\alpha)$- quantile of $\text{Binomial}\left(r,\frac{p}{p+(1-p)e^{-\epsilon}}\right)$, we get
\[
\Prob(w_{\text{train}}\ge \tilde{c}_{\epsilon,\alpha}\;|\; u(s) = u)\le \alpha\,,
\]
which completes the proof.